\documentclass[11pt]{article}

\usepackage{amsmath,amssymb,amsfonts,amsthm}
\usepackage{mathtools}
\usepackage{fullpage}
\usepackage{makecell}
\usepackage{enumitem}
\usepackage{hyperref}
\usepackage{graphicx}
\usepackage{xcolor}
\usepackage{natbib}
\usepackage{microtype}
\usepackage{booktabs}
\usepackage{mathtools}
\usepackage{authblk}

\usepackage{pgfplots}
\usepackage{tikz}
\pgfplotsset{compat=1.17}

\usepackage{algorithm}
\usepackage{algorithmic}

\DeclareMathOperator{\Var}{Var}
\DeclareMathOperator{\Cov}{Cov}

\newcommand{\E}{\mathbb{E}}
\newcommand{\Pbb}{\mathbb{P}}

\newcommand{\R}{\mathbb{R}}

\newtheorem{theorem}{Theorem}[section]

\newtheorem{corollary}[theorem]{Corollary}
\newtheorem{proposition}[theorem]{Proposition}
\theoremstyle{definition}
\newtheorem{definition}[theorem]{Definition}
\newtheorem{example}[theorem]{Example}

\theoremstyle{remark}
\newtheorem{remark}[theorem]{Remark}

\title{\bf
Bounded Difference Concentration for Infinitely Exchangeable Sequences with Applications to AI Benchmark Uncertainty}

\author[1]{Fangyuan Lin}
\author[2]{Spencer Frei}
\author[1]{Victor H. de la Pe\~{n}a}

\affil[1]{Department of Statistics, Columbia University}
\affil[2]{Google DeepMind}

\affil[ ]{\texttt{fl2744@columbia.edu}, \texttt{sfrei@google.com}, \texttt{vhd1@columbia.edu}}

\date{\today}

\begin{document}

\maketitle

\begin{abstract}
We consider the concentration properties of functions of infinitely exchangeable random variables. By conditioning on the de Finetti directing measure, we show that the deviation of any function with bounded-difference constants $c_1, \dots, c_n$ decomposes into a conditional sampling fluctuation and a latent mixture fluctuation.  When this latent mixture is $\sigma_{\mathrm{mix}}^2$-subgaussian, we establish a concentration inequality with an effective variance proxy of $\frac{1}{4}\sum_i c_i^2 + \sigma_{\mathrm{mix}}^2$. Crucially, we demonstrate that for zero-sum linear contrasts—such as the difference between a subsample mean and a full population mean—the latent mixture term cancels exactly. This cancellation yields a tight, mixture-free Hoeffding-type bound that provides a direct de Finetti mechanism for the infinite-extendibility limit of recent finite-exchangeable concentration results. We apply this framework to quantify uncertainty in composite AI benchmarks, such as MMLU, where question items naturally exhibit exchangeable dependence across domains. Our results provide both a domain-stratified hierarchical model for bounding the uncertainty of accuracy scores, and a distribution-free, cost-saving statistical guarantee for accurately estimating full benchmark scores from random subsets.
\end{abstract}

\medskip
\noindent\textbf{Keywords:} exchangeability, de~Finetti theorem, McDiarmid inequality, bounded differences, concentration inequalities, benchmark uncertainty, MMLU, leaderboard comparisons

\medskip
\noindent\textbf{AMS 2020 Subject Classifications:} Primary 60E15; Secondary 62P30, 62F10.

\tableofcontents

\section{Introduction}\label{sec:intro}
Concentration inequalities are fundamental for deriving generalization bounds and uncertainty guarantees from finite data. For independent random variables, Hoeffding's inequality provides variance-free tails for bounded sums, while McDiarmid's inequality extends these subgaussian tails to any statistic $f(X_1,\dots, X_n)$ that satisfies bounded-difference conditions with constants $c_1, \dots, c_n$ \citep{hoeffding1963probability,mcdiarmid1989method}. However, the classic martingale arguments underlying these results rely on independence. When observations share a latent source of randomness, these standard techniques break down.

There is no single exchangeable analogue of Hoeffding's inequality. Existing literature on variance-free concentration for sample means under exchangeability has largely pursued three directions: concentration for functions of i.i.d. random variables, concentration under structural assumptions on the function, and concentration for weighted sample means of exchangeable random variables around a finite-population mean. For instance, \citet{gottschling2026hoeffding} establish bounds relative to the extreme conditional means in the support of the de Finetti mixing measure, while \citet{foygel2024hoeffding} studies weighted sums centered at the finite-population mean.

In this paper, we develop a bounded-difference concentration inequality for infinitely exchangeable random variables. By de Finetti's theorem \citep{definetti1937}, an infinitely exchangeable sequence is conditionally i.i.d. given a directing random measure $\Theta$. Thus, the classical bounded-difference argument can be applied after conditioning on $\Theta$. Removing this conditioning reveals that the deviation of $f(X)$ from its expectation decomposes into two distinct sources of fluctuation:$$ f(X)-\E f(X) = \underbrace{f(X)-\E[f(X)\mid \Theta]}_{\text{conditional sampling fluctuation}} + \underbrace{\E[f(X)\mid \Theta]-\E f(X)}_{\text{latent mixture fluctuation}}. $$

The first term is controlled by the standard bounded-difference constants; the second term is intrinsic to exchangeability and must be bounded, estimated, or shown to vanish. If the latent mixture fluctuation is $\sigma_{\mathrm{mix}}^2$-subgaussian, we establish the concentration bound
$$ \Pbb\{|f(X)-\E f(X)|\ge t\} \le 2\exp\!\left( -\frac{t^2}{ 2\left(\frac{1}{4}\sum_i c_i^2+\sigma_{\mathrm{mix}}^2\right)} \right). $$
Example~\ref{ex:mixture-tight} shows that the resulting effective proxy is tight at the Gaussian scale: there are infinitely exchangeable Bernoulli laws for which the theorem has the correct exponential rate, up to polynomial factors in the tail parameter.

A key structural observation of this paper is that the mixture term cancels exactly for zero-sum linear contrasts. If $f(X_1,\ldots,X_N)=\sum_{i=1}^N a_iX_i$ with $\sum_{i=1}^N a_i=0$, then conditional identical distribution implies $\E[f(X_1,\ldots,X_N)\mid\Theta] = 0$. Consequently, no estimate of $\sigma_{\mathrm{mix}}^2$ is needed. Applying this observation to a subsample-vs-full sample mean comparison yields a Hoeffding-type bound that is the infinite-extendibility limit of the signed-weight finite-exchangeable result of \citet{foygel2024hoeffding}. Our proof provides the direct de Finetti explanation for this phenomenon: the concentration is a natural consequence of mixture cancellation for centered contrasts.

We apply these theoretical results to uncertainty quantification for composite AI benchmarks, focusing on the Massive Multitask Language Understanding (MMLU) benchmark \citep{hendrycks2020measuring}. A common practice in language model evaluation is to report confidence intervals using standard binomial variance approximations, treating test instances as independent Bernoulli trials \citep{madaan2024quantifying}. For composite benchmarks, this assumption is statistically implausible. Questions from related subjects probe overlapping forms of knowledge; a model that performs well in mathematics is highly likely to perform well in physics. The relevant dependence structure is therefore not independence, but a hierarchical version of exchangeability. This maps perfectly to our theoretical framework, as infinitely exchangeable variables naturally exhibit the positive pairwise correlation inherent to these tasks.

We organize our empirical analysis of six item-level MMLU response datasets around two statistical questions. First, how reliable is the observed MMLU score under a domain-stratified hierarchical exchangeability model? Second, how accurately can the full benchmark score be estimated from a random subset of questions? The first question targets an uncentered statistic, which necessitates a fitted beta--binomial proxy to model the latent subject and domain heterogeneity. The second question targets a subsample-vs-full zero-sum contrast, allowing the de Finetti mixture term to cancel exactly. This provides a direct, distribution-free method to trade evaluation cost against accuracy, demonstrating that benchmark uncertainty depends fundamentally on the estimand.

\section{Related Work}

One of the earliest concentration results for exchangeable data arise in sampling without replacement. \citet{serfling1974probability} proved Hoeffding-type inequalities for the sample mean when $X_1,\dots, X_n$ are drawn without replacement from a fixed population $\{a_1,\dots, a_N\} \subset [a,b]$: for any $t>0$,
\begin{equation}\label{eq:serfling}
  \Pbb\!\left(\bar{X}_n - \mu \ge t\right)
  \le \exp\!\left(-\frac{2nt^2}{(b-a)^2\left(1-\frac{n-1}{N}\right)}\right).
\end{equation}
which is tighter than the bound for independent data by a factor of $(1-(n-1)/N)$. \citet{bardenet2015concentration} subsequently sharpened Serfling's inequality and extended it to a Bernstein analogue that incorporates the empirical variance of the population. A useful survey of this finite-population perspective is \citet{tolstikhin2017concentration}, which reviews Serfling-type improvements which include \citet{el2009transductive} that develops concentration inequalities for more general permutation-symmetric functionals of samples drawn without replacement. Intuitively, sharper bounds than the independent counterparts arise because sampling without replacement induces
{negative} pairwise correlations:
$\Cov(X_i, X_j) = -\sigma^2_{\mathrm{pop}}/(N-1) < 0$, where $\sigma^2_{\mathrm{pop}} = \frac{1}{N} \sum_{i=1}^N(x_i-\mu)^2$ is the population variance. Our infinitely exchangeable setting is different. Infinitely exchangeable sequences exhibit positive pairwise correlations (Proposition~\ref{prop:positive-correlation}). Positive dependence inflates effective variance because positive correlation means the summands tend to move in the same direction, so averaging does not cancel fluctuations as efficiently. 

For general exchangeable sequences, the sample mean need not concentrate around the marginal mean. \citet{gottschling2026hoeffding} therefore prove Hoeffding-style upper and lower tail bounds relative to the largest and smallest conditional means in the support of the de Finetti mixing measure. Let $X_1,\ldots,X_m\in[0,1]$ be infinitely exchangeable, and write $\Theta$ for the de Finetti directing distribution.  Define
\[
    \mu(\Theta)=\E_\Theta X_1,
    \qquad
    \mu=\E_\mu(\Theta),
    \qquad
    \mu_+=\operatorname*{ess\,sup}_{\Theta}\mu(\Theta),
    \qquad
    \mu_- =\operatorname*{ess\,inf}_{\Theta}\mu(\Theta).
\]

Their theorem says that an exchangeable sample mean is contained near the interval $[\mu_-,\mu_+]$. In one-sided form, the result of \citet{gottschling2026hoeffding} gives
\[
    \Pbb\{\bar X_m-\mu_+\ge t\}\le \exp(-2mt^2),
\]
for $0<t<1-\mu_+$ and 

\[
    \Pbb\{\mu_- -\bar X_m\ge t\}\le \exp(-2mt^2),
\]
for $0< t<\mu_-$. A setting where our framework is more informative is when the mixture fluctuation is well behaved. Consider a Bernoulli mixture model $X_i \mid \Theta\sim \Theta$ where the conditional mean $\mu(\Theta)$ is usually near $1/2$ but has rare extreme values. For example, let $\mu(\Theta) = 1/2$ with probability $1-\epsilon$ and $\mu(\Theta) = 0$ or $1$ with probability $\epsilon/2$ each. Then the marginal mean of $X$ is $\mu = 1/2$ while $\mu_- = 0$ and $\mu_+ = 1$. Therefore, the bound in \citet{gottschling2026hoeffding} would give no uncertainty quantification around 1/2. In contrast, the mixture fluctuation $U = \mu(\Theta) - 1/2$ has moment generating function
\[
\mathbb E^{\lambda U} = 1-\epsilon + \epsilon \cosh(\lambda/2)
\]
so its optimal subgaussian proxy tends to zero as $\epsilon\downarrow 0$. With a valid bound on this proxy, our theorem~\ref{thm:mcdiarmid-exch} yields a nontrivial confidence bound around the marginal mean.

\citet{foygel2024hoeffding} controls weighted deviations from the finite-population mean $\bar X_N$. \citet{foygel2024hoeffding} established without parametric assumption that for any exchangeable $X_1,\ldots,X_N \in [-1,1]$ and fixed weights
$w_1,\ldots,w_n$:
\begin{equation}\label{eq:barber-review}
  \Pbb\!\left(\Bigl|\sum_{i=1}^n w_i (X_i - \bar{X}_N)\Bigr| \ge t\right)
  \le 2\exp\!\left(-\frac{t^2}{2\|w\|_2^2\,(1+\varepsilon_N)}\right),
\end{equation}
where $\varepsilon_N = (H_N - 1)/(N - H_N) = O(\log N / N) \to 0$. This result was later extended to richer symmetry structures in \citet{cheng2026concentration} such as matrix-valued exchangeable sequences. This is a natural tool for quantifying subsample-vs-full-sample discrepancies. First, the statistic is centered at the full-sample mean $\bar X_N$, while our main benchmark quantity, the MMLU score, $Z = K^{-1}\sum_k \bar X_k$ is an uncentered statistic, as it is not written as a deviation from a reference mean and for Q1 in Section~\ref{sec:mmlu}, where we ask how reliable the observed MMLU score is, the target is $Z$ itself and this framework does not apply directly. Second, the bound is distribution-free and it cannot exploit the case when $\sigma^2_{\mathrm{mix}}$ is small. Our Theorem~\ref{thm:mcdiarmid-exch} treats general (uncentered) statistics and incorporates $\sigma^2_{\mathrm{mix}}$.  Our Theorem~\ref{thm:subsample-full} treats the special zero-sum subsample--vs--full contrast; for Barber's signed-weight theorem it is the infinite-extension limit, while relative to the direct nonnegative-weight application it gains the finite-population factor $1/(1-n/N)$. 

Recent work has argued that language-model evaluations should be treated as experiments with uncertainty estimates rather than as pure leaderboard point estimates \cite{miller2024adding, madaan2024quantifying}. MMLU is a particularly important example because it aggregates performance across many academic and professional subjects \cite{hendrycks2020measuring}. Dataset-quality work such as MMLU-Redux further shows that benchmark uncertainty also includes annotation and construction error, not only sampling variability \cite{gema2025we}. In this paper, we give concentration results for the dependence structure induced by infinite exchangeability and demonstrate them on item-level MMLU response data.

\section{Setup}\label{sec:setup}
\paragraph{Exchangeability.}
A sequence $(X_i)_{i \ge 1}$ is \emph{infinitely exchangeable} if
$(X_1,\ldots,X_n) \stackrel{d}{=} (X_{\sigma(1)},\ldots,X_{\sigma(n)})$
for every $n\in\mathbb N$ and permutation $\sigma$.

\begin{theorem}[de Finetti \citep{definetti1937}]
\label{thm:definetti}
Let \((X_i)_{i\ge1}\) be an infinitely exchangeable sequence taking values in a standard Borel space.  Then there exists a random probability measure \(\Theta\) such that, conditional on \(\Theta\), the variables \(X_i\) are i.i.d.\ with common law \(\Theta\).
\end{theorem}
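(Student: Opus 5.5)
Since Theorem~\ref{thm:definetti} is classical, the natural plan is to reconstruct the standard proof rather than a new one. We construct $\Theta$ as the almost-sure limit of empirical measures and then verify the conditional i.i.d.\ property through the tail (or exchangeable) $\sigma$-field. By Borel isomorphism, take the state space $S$ to be a Borel subset of $[0,1]$; the general standard Borel case then transfers back through the measurable bijection. The argument proceeds in four steps.

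\textbf{Step 1: the empirical measures converge.} Let $\mathcal E$ be the exchangeable $\sigma$-field, i.e.\ the events invariant under finite permutations of the coordinates. For a bounded measurable $g$, set
\[
M_n(g)=\frac1n\sum_{i\le n} g(X_i).
\]
Exchangeability gives $\E[g(X_1)\mid \mathcal E_n]=M_n(g)$, where $\mathcal E_n$ is the $\sigma$-field of events symmetric in the first $n$ coordinates. Since $\mathcal E_n\downarrow\mathcal E$, the sequence $M_n(g)$ is a reverse martingale. It therefore converges a.s.\ and in $L^1$ to $\E[g(X_1)\mid\mathcal E]$.

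\textbf{Step 2: define $\Theta$.} Apply Step~1 to the countable family $g_q=\mathbf 1_{(-\infty,q]}$ with $q\in\mathbb Q$. This yields, on a single null-set complement, limits $F(q)$ that are monotone in $q$. Right-continuous regularization turns $F$ into a distribution function and so defines a random probability measure $\Theta$. A monotone-class argument then extends the identity $\Theta(g)=\E[g(X_1)\mid\mathcal E]$ a.s.\ to all bounded measurable $g$. Here the standard Borel assumption is used: it guarantees that $\Theta$ charges only $S$, via a regular conditional distribution.

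\textbf{Step 3: conditional independence.} This is the main obstacle. We must show that for bounded $g_1,\dots,g_k$,
\[
\E\Bigl[\textstyle\prod_{j} g_j(X_j)\Bigm|\mathcal E\Bigr]=\prod_j \Theta(g_j).
\]
Use exchangeability to average the left side over distinct index tuples $(i_1,\dots,i_k)\subset\{1,\dots,n\}$. This gives
\[
\E\Bigl[\textstyle\prod_j g_j(X_j)\Bigm|\mathcal E_n\Bigr]=\frac{(n-k)!}{n!}\sum_{\text{distinct}}\prod_j g_j(X_{i_j}).
\]
This average differs from $\prod_j M_n(g_j)$ by $O(k^2/n)$ times $\prod_j\|g_j\|_\infty$, because only $O(n^{k-1})$ tuples have repeated indices. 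Letting $n\to\infty$, reverse-martingale convergence handles the left side, and Step~1 handles the right. The identity follows. Since $\Theta$ is $\mathcal E$-measurable, the tower property then yields the same factorization conditional on $\Theta$.

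\textbf{Step 4: conclude.} By a $\pi$-$\lambda$ argument, the factorization of Step~3 holds for all finite-dimensional cylinder sets. Hence, conditional on $\Theta$, the $X_i$ are i.i.d.\ with law $\Theta$.
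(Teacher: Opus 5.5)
Your argument is correct. It is the standard reverse-martingale proof of de Finetti's theorem, and each step goes through. The paper does not prove Theorem~\ref{thm:definetti}; it imports the result by citation as a classical black box, so there is no competing route to compare against.

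What your reconstruction adds is an explicit description of the objects the paper later relies on. First, $\Theta$ is $\mathcal E$-measurable and is the almost-sure limit of the empirical measures. So for integrable real-valued $X_i$ one gets $\mu(\Theta)=\E[X_1\mid\mathcal E]=\lim_n\bar X_n$ a.s., and the mixture fluctuation $U_f$ is a fluctuation of long-run frequencies, which is the natural reading in the MMLU application. Second, Step~2 builds a regular conditional distribution of $X_1$ given $\mathcal E$, and Step~4 identifies $\Theta^{\otimes n}$ as the conditional law of $(X_1,\dots,X_n)$ given $\Theta$. That is what licenses applying McDiarmid's inequality pointwise in $\Theta$ in the proof of Theorem~\ref{thm:mcdiarmid-exch}, a step the paper uses without comment.

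A few routine points deserve a line each when written out:
\begin{itemize}
\item In Step~2, the base case of the monotone-class argument needs $\tilde F(q)=F(q)$ a.s.\ at rational $q$ for the regularization $\tilde F(x)=\inf_{q>x}F(q)$. This follows from conditional monotone convergence of $\E[\mathbf 1\{X_1\le q'\}\mid\mathcal E]$ as $q'\downarrow q$.
\item $\Theta(S)=1$ a.s.\ is that same identity applied to $g=\mathbf 1_S$, which is Borel since $S$ is a Borel subset of $[0,1]$. The standard Borel hypothesis therefore enters only through the Borel isomorphism. Your construction itself produces the regular conditional distribution, so you need no separate appeal to the existence theorem for such distributions.
\item In Step~3, the $O(k^2/n)$ discrepancy has two sources: the repeated-index tuples, and the mismatch between the normalizations $(n-k)!/n!$ and $n^{-k}$. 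Each contributes at most $\binom{k}{2}n^{-1}\prod_j\|g_j\|_\infty$.
\item Step~1 uses invariance of the law of the whole sequence under finite permutations. This follows from the paper's finite-dimensional definition by a $\pi$-$\lambda$ argument on cylinder sets.
\item Defining $\mathcal E:=\bigcap_n\mathcal E_n$ directly ensures L\'evy's downward theorem lands on exactly the $\sigma$-field with respect to which $\Theta$ is measurable.
\end{itemize}
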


Our concentration results presented later (Theorems~\ref{thm:mcdiarmid-exch} and~\ref{thm:subsample-full}) rely on the de~Finetti representation, which requires infinite exchangeability. To apply these bounds in practice, it's worth asking: does the data exhibit a signature that is necessary for infinite exchangeability? The de Finetti representation immediately gives a qualitative check.

\begin{proposition}[Positive correlation under infinite exchangeability \cite{aldous1985exchangeability}]\label{prop:positive-correlation}
Let $(X_i)_{i\ge1}$ be infinitely exchangeable and square-integrable.  Then, for $i\ne j$,
\[
    \Cov(X_i,X_j)\ge0.
\]
Moreover,
\[
    \Cov(X_i,X_j)=\Var(\mu(\Theta)),
    \qquad
    \mu(\Theta)=\E[X_1\mid\Theta].
\]
\end{proposition}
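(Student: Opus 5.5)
The plan is to apply the de Finetti representation (Theorem~\ref{thm:definetti}) and then the law of total covariance, conditioning on $\Theta$. Since the sequence is real-valued and square-integrable, $\R$ is a standard Borel space, so there is a random probability measure $\Theta$ given which the $X_i$ are i.i.d.\ with law $\Theta$. I will first check that $\mu(\Theta)=\E[X_1\mid\Theta]$ is well defined and square-integrable. It is a conditional expectation of an $L^2$ variable, so conditional Jensen gives $\E[\mu(\Theta)^2]\le \E[X_1^2]<\infty$. Because the $X_i$ are conditionally identically distributed, $\E[X_i\mid\Theta]=\mu(\Theta)$ for every $i$.

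Next I will decompose the covariance as
\[
\Cov(X_i,X_j)=\E\bigl[\Cov(X_i,X_j\mid\Theta)\bigr]+\Cov\bigl(\E[X_i\mid\Theta],\E[X_j\mid\Theta]\bigr).
\]
For $i\ne j$, conditional independence makes the first term vanish. To see this, note that $\E[X_iX_j\mid\Theta]=\E[X_i\mid\Theta]\,\E[X_j\mid\Theta]$ almost surely; this is justified because $X_iX_j$ is integrable by Cauchy--Schwarz, and the factorization follows from conditional independence. The second term is then $\Cov(\mu(\Theta),\mu(\Theta))=\Var(\mu(\Theta))$, which is nonnegative. This proves both claims at once.

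To avoid quoting the law of total covariance as a black box, I can compute directly. By the tower property,
\[
\E[X_iX_j]=\E\bigl[\E[X_iX_j\mid\Theta]\bigr]=\E[\mu(\Theta)^2],
\]
and $\E X_i=\E\mu(\Theta)$. Subtracting gives $\Cov(X_i,X_j)=\E[\mu(\Theta)^2]-(\E\mu(\Theta))^2=\Var(\mu(\Theta))$.

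The only delicate step is the measure-theoretic justification of $\E[X_iX_j\mid\Theta]=\mu(\Theta)^2$. Under the de Finetti representation, the conditional law of $(X_i,X_j)$ given $\Theta$ is $\Theta\otimes\Theta$. Integrating $(x,y)\mapsto xy$ against this product measure gives $\bigl(\int x\,\Theta(dx)\bigr)^2$, and Fubini applies because $\int x^2\,\Theta(dx)<\infty$ almost surely (its expectation is $\E X_1^2$). Alternatively, the same conclusion follows via truncation and monotone convergence.
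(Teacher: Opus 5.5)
Your proof is correct and follows the paper's argument: you apply the law of total covariance conditioning on $\Theta$, note that the conditional covariance vanishes for $i\ne j$ by conditional independence, and identify the remaining term as $\Var(\mu(\Theta))\ge 0$. Your extra integrability checks (conditional Jensen for $\mu(\Theta)\in L^2$, and Fubini against $\Theta\otimes\Theta$ to justify $\E[X_iX_j\mid\Theta]=\mu(\Theta)^2$) fill in details the paper leaves implicit rather than changing the route.
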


\begin{proof}
By the law of total covariance,
\[
\Cov(X_i,X_j)
=
\E[\Cov(X_i,X_j\mid\Theta)]
+
\Cov(\E[X_i\mid\Theta],\E[X_j\mid\Theta]).
\]
The first term is zero because the variables are conditionally independent given $\Theta$.  The second is $\Var(\mu(\Theta))$.
\end{proof}

Proposition~\ref{prop:positive-correlation} gives a qualitative implication of
the de Finetti representation: shared latent variation induces nonnegative covariance. The positive correlation property has a natural interpretation in the MMLU benchmark setting. Intuitively, a model that excels at abstract algebra is more likely, not less, to perform well on formal logic. The alternative---that the ability is ``traded off'' across domains, so that strength in one area implies weakness in another---would manifest as negative correlations and would violate the infinite exchangeability assumption.

In Section~\ref{sec:mmlu}, we present fixed-model diagnostics based on item-level variance decompositions, intraclass-correlation estimates, and within-domain split-half correlations. We also include a model-centered subject--subject heatmap as a descriptive visualization of residual domain structure across the evaluated models.

\paragraph{Bounded differences.}
A function $f:\mathcal{X}^n \to \R$ satisfies \emph{bounded differences}
with constants $(c_i)_{i=1}^n$ if $|f(x)-f(x')| \le c_i$ whenever $x$ and $x'$
differ only in coordinate $i$. 

For independent $(X_i)$, McDiarmid's inequality \citep{mcdiarmid1989method} guarantees subgaussian concentration around the mean:
\[
    \Pbb(|f(X)-\E f(X)|\ge t)
    \le
    2\exp\!\left(-\frac{2t^2}{\sum_i c_i^2}\right).
\]
This corresponds to an optimal variance proxy of $\frac{1}{4}\sum_i c_i^2$. For comparison, the related Efron--Stein inequality \citep{efron1981jackknife} provides a slightly looser upper bound on the actual variance, $\Var(f) \le \frac{1}{2}\sum_i c_i^2$, by considering the expected squared difference when replacing one coordinate with an independent copy.

The tight proxy $\frac{1}{4}\sum_i c_i^2$ relies fundamentally on the Doob martingale $Z_i = \E[f(X) \mid X_1, \dots, X_i]$. Under independence, observing $X_i$ restricts only the $i$-th argument of $f$, ensuring the martingale differences are bounded by $c_i$. However, under general exchangeability, revealing $X_i$ updates the posterior distribution of the latent measure $\Theta$, which shifts the conditional expectations of all unobserved variables. This global update inflates the martingale differences, rendering the classical variance-free argument invalid and motivating the conditional de Finetti decomposition developed in Section~\ref{sec:main-bound}.

To ground these bounded-difference constants in our motivating application, consider the subject-macro benchmark accuracy $Z$ over $K$ subjects, where $m_k$ is the number of items in subject $k$ and $Z = \frac{1}{K}\sum_{k=1}^K \bar X_k$:
\begin{itemize}[noitemsep]
\item Treated as a function of the $K$ subject averages, the bounded-difference constants are $c_k = 1/K$.
\item Treated as a function of the individual items, the bounded-difference constants are $c_{k,j} = 1/(K m_k)$ for item $j$ in subject $k$.
\end{itemize}

\section{A bounded-difference inequality under de Finetti exchangeability}\label{sec:main-bound}

Let
\[
    U_f
    =
    \E[f(X_1,\ldots,X_n)\mid\Theta]
    -
    \E f(X_1,\ldots,X_n)
\]
be the latent mixture fluctuation. By the tower property, $\E [U_f]=0$.

\begin{definition}[Between-mixture subgaussian proxy]\label{def:sigma-mix}
We say that $U_f$ is $\sigma_{\mathrm{mix}}^2$-subgaussian if
\[
    \E e^{\lambda U_f}
    \le
    \exp\!\left(\frac{\lambda^2\sigma_{\mathrm{mix}}^2}{2}\right)
    \qquad\text{for all }\lambda\in\R.
\]
\end{definition}

This assumption is often reasonable in the settings of interest here. First, if $f$ is bounded, then $\E[f(X)\mid\Theta]$ is also bounded, so $U$ is automatically subgaussian, with a conservative proxy determined by the range of $\E[f(X)\mid\Theta]$. Thus for bounded benchmark scores, such as accuracies in $[0,1]$, the assumption always holds in a trivial sense. In particular, it is satisfied by the beta--Bernoulli hierarchy used in Section~\ref{sec:hierarchical}. The substantive question is whether the corresponding proxy $\sigma^2_{\mathrm{mix}}$ is \emph{small enough} to yield a useful concentration bound. 

On the other hand, the assumption may be loose or uninformative if the latent mixture is highly heterogeneous or strongly multimodal. Thus Definition~\ref{def:sigma-mix} should be viewed as a light-tailed mixture hypothesis: it is mild enough to hold automatically in bounded settings, but its practical value depends on whether $\sigma^2_{\mathrm{mix}}$ is materially smaller than the trivial worst-case bound.

\begin{theorem}[Bounded differences for infinitely exchangeable sequences]\label{thm:mcdiarmid-exch}
Let $(X_i)_{i\ge1}$ be infinitely exchangeable with directing measure $\Theta$.  Let $f:\mathcal X^n\to\R$ satisfy bounded differences with constants $c_1,\ldots,c_n$.  If the directing-measure fluctuation $U_f=\E[f\mid\Theta]-\E f$ is $\sigma_{\mathrm{mix}}^2$-subgaussian, then
\[
    \E\exp\{\lambda(f-\E f)\}
    \le
    \exp\!\left[
        \lambda^2
        \left(
            \frac18\sum_{i=1}^n c_i^2
            +
            \frac12\sigma_{\mathrm{mix}}^2
        \right)
    \right],
\]
and for all $t>0$,
\[
    \Pbb\{|f-\E f|\ge t\}
    \le
    2\exp\!\left[
        -\frac{t^2}{2\sigma_{\mathrm{eff}}^2}
    \right],
    \qquad
    \sigma_{\mathrm{eff}}^2
    =
    \frac14\sum_{i=1}^n c_i^2+\sigma_{\mathrm{mix}}^2.
\]
\end{theorem}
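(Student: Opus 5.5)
The plan is to condition on the directing measure $\Theta$ from Theorem~\ref{thm:definetti} and split the centered statistic as in the introduction:
\[
f-\E f=\bigl(f-\E[f\mid\Theta]\bigr)+U_f .
\]
Conditional on $\Theta$, the coordinates $X_1,\ldots,X_n$ are i.i.d.\ with law $\Theta$. Since the bounded-difference property of $f$ is deterministic, the classical Doob-martingale argument applies verbatim under the conditional law $\Pbb(\cdot\mid\Theta)$. Concretely, set $Z_i=\E[f\mid\Theta,X_1,\ldots,X_i]$. Conditional independence given $\Theta$ means each increment $Z_i-Z_{i-1}$ lies, given $(\Theta,X_{1:i-1})$, in an interval of length at most $c_i$. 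Hoeffding's lemma then gives, almost surely,
\[
\E\bigl[e^{\lambda(f-\E[f\mid\Theta])}\bigm|\Theta\bigr]\le \exp\Bigl(\tfrac{\lambda^2}{8}\textstyle\sum_i c_i^2\Bigr).
\]
This is the conditional McDiarmid moment generating function bound.

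Next I use the tower property. Since $U_f$ is $\Theta$-measurable,
\[
\E e^{\lambda(f-\E f)}=\E\Bigl[e^{\lambda U_f}\,\E\bigl[e^{\lambda(f-\E[f\mid\Theta])}\bigm|\Theta\bigr]\Bigr]\le e^{\lambda^2\sum_i c_i^2/8}\,\E e^{\lambda U_f}.
\]
Applying Definition~\ref{def:sigma-mix} to the last factor gives the stated MGF bound with exponent $\lambda^2\bigl(\tfrac18\sum_i c_i^2+\tfrac12\sigma_{\mathrm{mix}}^2\bigr)=\lambda^2\sigma_{\mathrm{eff}}^2/2$. The key point is that no independence between the two pieces is required: the conditional bound is uniform in $\Theta$, so it factors out deterministically.

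For the tail bound I use Chernoff: $\Pbb\{f-\E f\ge t\}\le \exp(-\lambda t+\lambda^2\sigma_{\mathrm{eff}}^2/2)$, and choosing $\lambda=t/\sigma_{\mathrm{eff}}^2$ gives $e^{-t^2/(2\sigma_{\mathrm{eff}}^2)}$. Replacing $f$ by $-f$ handles the lower tail. The function $-f$ has the same constants $c_i$, and $U_{-f}=-U_f$ is still $\sigma_{\mathrm{mix}}^2$-subgaussian because the definition quantifies over all $\lambda\in\R$. A union bound over the two tails yields the factor $2$.

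The main obstacle is making the conditional step rigorous: justifying that McDiarmid applies under a regular conditional distribution given $\Theta$, and that $\E[f\mid\Theta]$ is well defined. I would handle this by working on the standard Borel space, using a regular version of the conditional law, which is the product measure $\Theta^{\otimes n}$. I would then apply the deterministic McDiarmid MGF bound for each fixed product measure, note that it holds for every fixed probability measure $\theta$ on $\mathcal X$, and plug in $\theta=\Theta(\omega)$. Measurability of $\theta\mapsto\int e^{\lambda(f-\int f\,d\theta^{\otimes n})}\,d\theta^{\otimes n}$ follows from the standard Borel setting. Integrability of $f$ is automatic because bounded differences force $f$ to be bounded up to a constant on product space; at worst I would assume $\E|f|<\infty$.
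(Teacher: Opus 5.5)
Your proposal is correct and takes essentially the same approach as the paper. In both, $f-\E f$ is split into the conditional sampling fluctuation and $U_f$, McDiarmid's moment bound is applied under the conditional product law given $\Theta$, the uniform conditional bound is pulled out via the tower property, and the proof finishes with the subgaussian assumption on $U_f$, Chernoff's method, and the same argument applied to $-f$. Your extra remarks (the conditional Doob martingale, the regular conditional law $\Theta^{\otimes n}$, and boundedness of $f$) spell out measure-theoretic details that the paper leaves implicit.
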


\begin{proof}
Write
\[
    f-\E f
    =
    \underbrace{f-\E[f\mid\Theta]}_{W}
    +
    \underbrace{\E[f\mid\Theta]-\E f}_{U_f}.
\]
Conditional on $\Theta$, the variables are independent and $f$ has the same bounded-difference constants.  Hence the conditional McDiarmid moment bound gives
\[
    \E[e^{\lambda W}\mid\Theta]
    \le
    \exp\!\left(\frac{\lambda^2}{8}\sum_i c_i^2\right).
\]
Therefore
\[
\begin{aligned}
\E e^{\lambda(f-\E f)}
&=
\E\!\left[
    e^{\lambda U_f}\E(e^{\lambda W}\mid\Theta)
\right]\\
&\le
\exp\!\left(\frac{\lambda^2}{8}\sum_i c_i^2\right)
\E e^{\lambda U_f}\\
&\le
\exp\!\left[
        \lambda^2
        \left(
            \frac18\sum_i c_i^2
            +
            \frac12\sigma_{\mathrm{mix}}^2
        \right)
    \right].
\end{aligned}
\]
Chernoff's method yields the upper tail, and applying the same argument to $-f$ gives the two-sided bound.
\end{proof}

\begin{figure}[h!]
    \centering
    \includegraphics[width=0.8\textwidth]{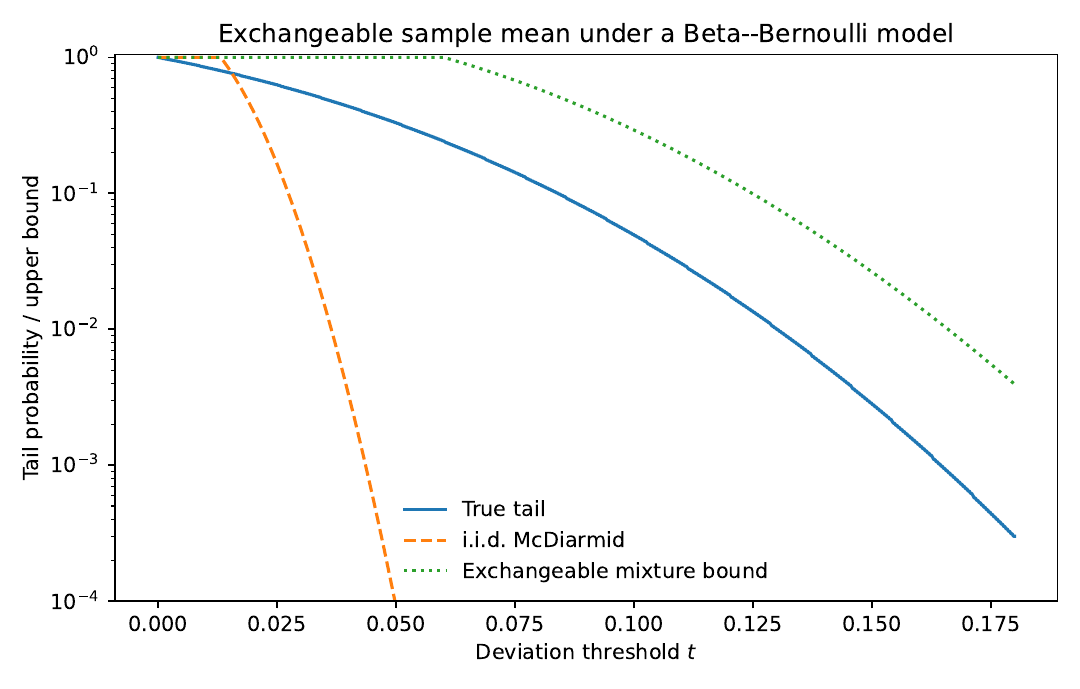}
    \caption{Illustration of Theorem~\ref{thm:mcdiarmid-exch} in a Beta--Bernoulli de Finetti model. The naive independence curve ignores the latent mixture and is therefore not valid.}
    \label{fig:exchangeable-mean-tail}
\end{figure}

Figure~\ref{fig:exchangeable-mean-tail} illustrates the theorem in the Beta--Bernoulli model
\[
    \Theta\sim{\rm Beta}(50,50),
    \qquad
    X_i\mid\Theta\sim{\rm Bernoulli}(\Theta).
\]
in the case of sample mean with $n=2000$.

\begin{example}[Tightness at the Gaussian scale]\label{ex:mixture-tight}
This example shows that the proxy in Theorem~\ref{thm:mcdiarmid-exch} has the correct Gaussian scale.  For each $n$, let $R_1,\ldots,R_n$ be i.i.d. Rademacher random variables and define the de Finetti parameter
\[
    P_n
    =
    \frac12+\frac{1}{4n}\sum_{j=1}^n R_j .
\]
Then $P_n\in[1/4,3/4]$.  Conditional on $P_n$, let
\[
    X_i\mid P_n \stackrel{\mathrm{iid}}{\sim} {\rm Bernoulli}(P_n),
    \qquad i=1,2,\ldots,
\]
so that, for each $n$, the sequence is infinitely exchangeable.  Take
\[
    f_n(X_1,\ldots,X_n)=\bar X_n=\frac1n\sum_{i=1}^n X_i .
\]
The bounded-difference constants are $c_i=1/n$, so the conditional McDiarmid proxy is $\frac14\sum_{i=1}^n c_i^2=\frac{1}{4n}$.
The directing-measure fluctuation is
\[
    U_{f_n}(P_n)
    =
    \E[\bar X_n\mid P_n]-\E\bar X_n
    =
    P_n-\frac12
    =
    \frac{1}{4n}\sum_{j=1}^n R_j .
\]
Since a weighted Rademacher sum with weights $a_j$ is sub-Gaussian with proxy $\sum_j a_j^2$, we may take
\[
    \sigma_{\rm mix}^2
    =
    n\left(\frac{1}{4n}\right)^2
    =
    \frac{1}{16n}.
\]
Therefore Theorem~\ref{thm:mcdiarmid-exch} uses the effective proxy $\sigma_{{\rm eff},n}^2
    =
    \frac{1}{4n}+\frac{1}{16n}
    =
    \frac{5}{16n}.$
On the other hand,
\[
    \sqrt n\, (\bar X_n-1/2)
    =
    \underbrace{\sqrt n\,(P_n-1/2)}_{\to N(0,1/16)}
    +
    \underbrace{\sqrt n\,(\bar X_n-P_n)}_{\to N(0,1/4)} .
\]
The two terms are asymptotically independent: the first is a function of $P_n$
alone, while the second, conditional on $P_n$, is a standardized sum of i.i.d.\
$\mathrm{Bernoulli}(P_n)$ variables, so it depends on the first only through its
conditional variance $P_n(1-P_n)$.  Since $P_n\to\frac12$ and $p\mapsto p(1-p)$
is continuous, this variance converges to $\tfrac14$, so the limiting conditional
law is free of $P_n$, and hence
\[
    \sqrt n\, (\bar X_n-1/2)\Rightarrow N\!\left(0,\tfrac5{16}\right).
\]

A more rigorous argument can be achieved via convergence of characteristic functions.

Consequently, for every fixed $a>0$,
\[
    \Pbb\{\bar X_n-\E\bar X_n\ge a\sigma_{{\rm eff},n}\}
    \longrightarrow
    1-\Phi(a),
\]
whereas Theorem~\ref{thm:mcdiarmid-exch} gives
\[
    \Pbb\{\bar X_n-\E\bar X_n\ge a\sigma_{{\rm eff},n}\}
    \le
    e^{-a^2/2}.
\]
This example shows that the additive proxy $\frac14\sum_i c_i^2+\sigma_{\rm mix}^2$ has the correct asymptotic variance scale in general. In the construction above, the conditional sampling fluctuation contributes $1/(4n)$, the directing-measure
fluctuation contributes $1/(16n)$, and the normalized statistic converges to
a Gaussian with variance equal to the sum of these two contributions. Thus the
mixture proxy cannot, in general, be omitted or replaced by a lower-order term. The resulting tail bound is sharp at the Gaussian-exponent scale.
\end{example}

\section{When the mixture term vanishes and the subsample-vs-full concentration}

For many statistics the mixture proxy must be bounded or estimated.  For zero-sum linear contrasts, it vanishes.

\begin{proposition}[Zero-sum linear contrasts]\label{prop:zero-sum}
Let $(X_i)$ be infinitely exchangeable with $\E|X_1|<\infty$.  For
\[
    f(X_1,\ldots,X_N)=\sum_{i=1}^N a_iX_i,
\]
we have
\[
    \E[f\mid\Theta]
    =
    \left(\sum_{i=1}^N a_i\right)\mu(\Theta),
    \qquad
    \mu(\Theta)=\E[X_1\mid\Theta].
\]
Consequently, if $\sum_i a_i=0$, then $U_f=0$ almost surely and $\sigma_{\mathrm{mix}}^2=0$ is a valid mixture proxy.  Conversely, if $\Var(\mu(\Theta))>0$, then a zero mixture proxy is possible only when $\sum_i a_i=0$.
\end{proposition}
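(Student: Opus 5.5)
The proof will go through the de Finetti representation (Theorem~\ref{thm:definetti}) and linearity of conditional expectation. First, I will check that each $X_i$ is conditionally integrable given $\Theta$, which follows from $\E|X_1|<\infty$ together with exchangeability. Then, since conditionally on $\Theta$ the $X_i$ are i.i.d.\ with law $\Theta$, each satisfies $\E[X_i\mid\Theta]=\int x\,\Theta(dx)=\E[X_1\mid\Theta]=\mu(\Theta)$ almost surely. Only conditional identical distribution is needed here, not conditional independence. By linearity,
\[
\E[f\mid\Theta]=\sum_{i=1}^N a_i\E[X_i\mid\Theta]=\Bigl(\sum_{i=1}^N a_i\Bigr)\mu(\Theta).
\]
Taking expectations gives $\E f=(\sum_i a_i)\E\mu(\Theta)$. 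Hence
\[
U_f=\Bigl(\sum_i a_i\Bigr)\bigl(\mu(\Theta)-\E\mu(\Theta)\bigr).
\]

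If $\sum_i a_i=0$, this identity gives $U_f=0$ almost surely. Then $\E e^{\lambda U_f}=1\le e^{0}$ for every $\lambda$, so $\sigma_{\mathrm{mix}}^2=0$ satisfies Definition~\ref{def:sigma-mix}.

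For the converse, suppose $U_f$ is $0$-subgaussian, meaning $\E e^{\lambda U_f}\le 1$ for all $\lambda$. Jensen's inequality gives $\E e^{\lambda U_f}\ge e^{\lambda\E U_f}=1$, so equality holds throughout. I then need to conclude that $U_f=0$ almost surely. The cleanest route avoids assuming second moments: by the strict convexity of $\exp$, equality in Jensen forces $U_f$ to be almost surely constant, and that constant is $\E U_f=0$. An alternative is the limit $\lim_{\lambda\to0}2(\E e^{\lambda U_f}-1)/\lambda^2=\E U_f^2$, but that would need a finite variance. Once $U_f=0$ almost surely, we have $(\sum_i a_i)^2\Var(\mu(\Theta))=0$ if $\mu(\Theta)$ is square-integrable. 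More generally, $(\sum_i a_i)(\mu(\Theta)-\E\mu(\Theta))=0$ almost surely, and the hypothesis $\Var(\mu(\Theta))>0$ means $\mu(\Theta)$ is not almost surely constant. Therefore $\sum_i a_i=0$.

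The only slightly delicate step is the measure-theoretic justification of $\E[X_i\mid\Theta]=\mu(\Theta)$ for every $i$. I would handle it by writing $\E[X_i\mid\Theta]=\int x\,\Theta(dx)$, which follows from the conditional law of $X_i$ given $\Theta$ being $\Theta$ itself. The rest is linearity plus the Jensen equality case.
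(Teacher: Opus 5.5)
Your proposal is correct and follows the paper's proof: the identity comes from linearity of conditional expectation plus conditional identical distribution, and the converse comes from nondegeneracy of $\mu(\Theta)$. The only difference is in the converse. The paper computes $\Var(U_f)=(\sum_i a_i)^2\Var(\mu(\Theta))$ and tacitly uses that a subgaussian proxy dominates the variance, whereas your Jensen equality-case argument shows directly that a zero proxy forces $U_f=0$ almost surely; this is slightly more self-contained, though your concern about second moments is moot because any subgaussian $U_f$ has finite moments of all orders.
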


\begin{proof}
The identity follows from conditional identical distribution:
\[
    \E[f\mid\Theta]=\sum_i a_i\E[X_i\mid\Theta]=\left(\sum_i a_i\right)\mu(\Theta).
\]
The variance of the mixture fluctuation is
\[
    v_{\mathrm{mix}}^2(f)
    =
    \left(\sum_i a_i\right)^2\Var(\mu(\Theta)).
\]
Thus zero-sum coefficients make the fluctuation identically zero.  Under nondegeneracy $\Var(\mu(\Theta))>0$, no nonzero coefficient sum can have zero mixture variance, and hence no zero subgaussian proxy.

\end{proof}

The zero-sum calculation is one instance of a more general principle.  The mixture term for a statistic \(f\) is
\[
    U_f=\E[f(X)\mid\Theta]-\E f(X).
\]
Thus mixture cancellation occurs whenever \(\E[f(X)\mid\Theta]\) is constant as a function of the directing measure.  Zero-sum linear contrasts are the most useful case for the benchmark applications in this paper, but the principle is not limited to linear averages. Another conceptual example is an antisymmetric paired statistic. Suppose \(h:\mathcal X^2\to\mathbb R\) is bounded and satisfies \(h(x,y)=-h(y,x)\). Conditional on \(\Theta\), the two arguments in a pair are i.i.d., so
\[
    \E[h(X_1,X_2)\mid\Theta]
    =
    \E[h(X_2,X_1)\mid\Theta]
    =
    -\E[h(X_1,X_2)\mid\Theta].
\]
Hence \(\E[h(X_1,X_2)\mid\Theta]=0\) almost surely.

\subsection{The subsample-vs-full statistic}

For $n\le N$, define
\[
    \bar X_n=\frac1n\sum_{i=1}^n X_i,
    \qquad
    \bar X_N=\frac1N\sum_{i=1}^N X_i,
    \qquad
    T_{n,N}=\bar X_n-\bar X_N.
\]
This can be written as $T_{n,N}=\sum_{j=1}^N a_jX_j$ with
\[
    a_j=
    \begin{cases}
        \frac1n-\frac1N, & j\le n,\\[3pt]
        -\frac1N, & j>n.
    \end{cases}
\]
The coefficients sum to zero, and
\[
    \sum_{j=1}^N a_j^2
    =
    n\left(\frac1n-\frac1N\right)^2
    +(N-n)\frac1{N^2}
    =
    \frac{N-n}{nN}.
\]

\begin{theorem}[Subsample-vs-full-score concentration]\label{thm:subsample-full}
Let $(X_i)_{i\ge1}$ be infinitely exchangeable with $X_i\in[a,b]$ almost surely, and let $R=b-a$.  For $1\le n<N$,
\[
    \Pbb\left\{|\bar X_n-\bar X_N|\ge t\right\}
    \le
    2\exp\!\left(
        -\frac{2t^2 nN}{R^2(N-n)}
    \right).
\]
Equivalently, with probability at least $1-\alpha$,
\[
    |\bar X_n-\bar X_N|
    \le
    R\sqrt{\frac{N-n}{2nN}\log\frac{2}{\alpha}}.
\]
\end{theorem}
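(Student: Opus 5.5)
The plan is to view $T_{n,N}=\bar X_n-\bar X_N$ as the zero-sum linear contrast $\sum_{j=1}^N a_jX_j$ with the coefficients $a_j$ displayed just before the statement. We then condition on the de Finetti directing measure $\Theta$ from Theorem~\ref{thm:definetti} and apply Hoeffding's lemma coordinatewise. Proposition~\ref{prop:zero-sum} handles the mixture term: since $\sum_j a_j=0$, we have $\E[T_{n,N}\mid\Theta]=0$ almost surely, so $U_f\equiv 0$ and we may take $\sigma_{\mathrm{mix}}^2=0$. Boundedness of $X_i\in[a,b]$ guarantees $\E|X_1|<\infty$, so the proposition applies.

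For the conditional sampling fluctuation, we work given $\Theta$, where the $X_j$ are i.i.d. Each summand $a_jX_j$ then lies in an interval of length $|a_j|R$. We deliberately avoid Theorem~\ref{thm:mcdiarmid-exch} with generic bounded-difference constants $c_j=|a_j|R$ only because we want to keep the conditional centering explicit. That theorem would give the same result: it yields $\sigma_{\mathrm{eff}}^2=\tfrac14 R^2\sum_j a_j^2+0$. Either way, conditional independence together with Hoeffding's lemma gives
\[
\E\bigl[e^{\lambda T_{n,N}}\mid\Theta\bigr]=\prod_{j=1}^N \E\bigl[e^{\lambda a_j(X_j-\mu(\Theta))}\mid\Theta\bigr]\le \exp\Bigl(\frac{\lambda^2R^2}{8}\sum_{j=1}^N a_j^2\Bigr),
\]
where we have used $\sum_j a_j\mu(\Theta)=0$ to insert the centering for free.

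Taking expectations removes the conditioning. Substituting $\sum_j a_j^2=(N-n)/(nN)$ gives the unconditional bound $\E e^{\lambda T_{n,N}}\le\exp\bigl(\lambda^2R^2(N-n)/(8nN)\bigr)$. A Chernoff bound optimized at $\lambda=4tnN/(R^2(N-n))$ then gives the one-sided tail $\exp\bigl(-2t^2nN/(R^2(N-n))\bigr)$. Applying the same argument to $-T_{n,N}$ and taking a union bound yields the factor $2$. Setting the right-hand side equal to $\alpha$ and solving for $t$ gives the equivalent confidence-interval form.

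There is no serious obstacle; the argument is routine once the pieces are in place. The only points needing care are, first, that conditioning on $\Theta$ legitimately turns the problem into an independent one, and second, that the exact cancellation of the conditional mean (via identical conditional marginals) is what makes the centering at $0$ valid without any mixture assumption. Both are supplied by Theorem~\ref{thm:definetti} and Proposition~\ref{prop:zero-sum}.
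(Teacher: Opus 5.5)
Your proposal is correct and follows essentially the same route as the paper, which uses Proposition~\ref{prop:zero-sum} to set $\sigma_{\mathrm{mix}}^2=0$ and then applies Theorem~\ref{thm:mcdiarmid-exch} with $c_j=R|a_j|$, so that $\sum_j c_j^2=R^2(N-n)/(nN)$. Your inlined conditional Hoeffding-lemma computation is exactly the linear special case of the conditional McDiarmid step inside that theorem, and your Chernoff optimization and $\alpha$-inversion match the stated bounds.
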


\begin{proof}
By Proposition~\ref{prop:zero-sum}, $\E[T_{n,N}\mid\Theta]=0$, so $\sigma_{\mathrm{mix}}^2=0$.  Changing coordinate $j$ changes $T_{n,N}$ by at most $R|a_j|$, so
\[
    \sum_{j=1}^N c_j^2
    =
    R^2\sum_{j=1}^N a_j^2
    =
    R^2\frac{N-n}{nN}.
\]
Apply Theorem~\ref{thm:mcdiarmid-exch}.
\end{proof}

\begin{remark}[Full sample collapse]\label{rem:n-equals-N}
At $n=N$,
\[
    \bar X_n=\bar X_N
\]
deterministically, so the deviation probability is zero for every $t>0$.  The half-width in Theorem~\ref{thm:subsample-full} tends to zero as $n\uparrow N$.
\end{remark}

\begin{remark}[Uniform random subsets]\label{rem:random-subset}
The theorem is stated for the first $n$ coordinates only to simplify notation.  If $S\subset\{1,\ldots,N\}$ is a uniformly random subset of size $n$, chosen independently of the exchangeable sequence, then exchangeability implies
\[
    \frac1n\sum_{i\in S}X_i-\bar X_N
    \stackrel{d}{=}
    \bar X_n-\bar X_N.
\]
Thus the same bound applies to a random subset selected before evaluation.
\end{remark}

\subsection{Connection to prior work}
\label{sec:finite-exch-comparison}

The closest direct comparison for Theorem~\ref{thm:subsample-full} is the finite-vector
weighted-sum bound of \citet{foygel2024hoeffding}. Barber's theorem is stated for finite exchangeable vectors and does not require infinite extendibility. Barber emphasizes that her weighted-sum bounds recover the classical iid constants as the ambient exchangeable length tends to infinity. The de Finetti proof presented in this paper identifies why this happens: the zero-sum structure removes the latent mixture term.

We first state the relevant Hoeffding form of Barber's result in the notation of this paper.

\begin{theorem}[Finite-exchangeable Hoeffding bound \cite{foygel2024hoeffding}]
\label{thm:barber-hoeffding}
Let \(X_1,\ldots,X_N\in[a,b]\) be exchangeable, let
\[
    \bar X_N=\frac1N\sum_{j=1}^N X_j,
    \qquad
    R=b-a,
\]
and let \(w=(w_1,\ldots,w_N)\in\mathbb R^N\) be deterministic.  Define
\[
    H_N=\sum_{k=1}^N\frac1k,
    \qquad
    \varepsilon_N=\frac{H_N-1}{N-H_N}.
\]
Then, for every \(t>0\),
\[
    \Pbb\left\{\left|\sum_{i=1}^N w_i(X_i-\bar X_N)\right|\ge t\right\}
    \le
    2\exp\!\left(
        -\frac{2t^2}{R^2\|w\|_2^2(1+\varepsilon_N)}
    \right).
\]
If \(w_i\ge0\) for all \(i\), then the same inequality holds with
\((1+\varepsilon_N)\) replaced by \(1\):
\[
    \Pbb\left\{
        \left|\sum_{i=1}^N w_i(X_i-\bar X_N)\right|\ge t
    \right\}
    \le
    2\exp\left(
        -\frac{2t^2}{R^2\|w\|_2^2}
    \right).
\]
\end{theorem}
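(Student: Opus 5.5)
The paper uses de Finetti for its own results, but here I would avoid it, since $X_1,\ldots,X_N$ is only \emph{finitely} exchangeable. Instead I would condition on the empirical measure, i.e.\ on the multiset $\{X_1,\ldots,X_N\}=\{x_1,\ldots,x_N\}$. Conditionally, $(X_1,\ldots,X_N)\stackrel{d}{=}(x_{\pi(1)},\ldots,x_{\pi(N)})$ with $\pi$ a uniform random permutation. Also $\bar X_N=\bar x$ is fixed, and
\[
S=\sum_i w_i(X_i-\bar X_N)=\sum_i (w_i-\bar w)X_i .
\]
So it suffices to prove a conditional moment generating function bound $\E[e^{\lambda S}\mid \text{multiset}]\le \exp(\lambda^2R^2\kappa\|w\|_2^2/8)$, with $\kappa=1+\varepsilon_N$ in general and $\kappa=1$ for $w\ge0$. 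Integrating out the multiset and applying Chernoff to $\pm S$ then gives both displayed inequalities. The bound must not depend on the multiset.

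To get the conditional bound I would use a Doob martingale that reveals the coordinates one at a time in some order; after relabelling, take the order $1,\ldots,N$. Let $m_{k}$ be the mean of the $N-k$ values not yet revealed after step $k$, and $W_{>k}=\sum_{j>k}(w_j-\bar w)$. Given $\mathcal F_k$, the unrevealed values are a uniform permutation of the remaining multiset, so
\[
M_k=\E[S\mid\mathcal F_k]=\sum_{i\le k}(w_i-\bar w)X_i+W_{>k}m_k .
\]
Using $(N-k)m_k=(N-k+1)m_{k-1}-X_k$, the increment should simplify to $D_k=(X_k-m_{k-1})\,c_k$ with
\[
c_k=w_k-\frac{1}{N-k}\sum_{j>k}w_j ,
\]
which is invariant under shifting $w$. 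Given $\mathcal F_{k-1}$, the factor $X_k-m_{k-1}$ has mean zero and lies in an interval of length $R$. The conditional Hoeffding lemma then gives $\E[e^{\lambda D_k}\mid\mathcal F_{k-1}]\le e^{\lambda^2R^2c_k^2/8}$, and iterating yields the proxy $\frac{R^2}{4}\sum_k c_k^2$; the last step $k=N$ contributes nothing.

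The remaining step, which I expect to be the main obstacle, is the deterministic inequality
\[
\sum_{k<N}\Bigl(w_k-\tfrac{1}{N-k}\textstyle\sum_{j>k}w_j\Bigr)^2\le(1+\varepsilon_N)\|w\|_2^2 ,
\]
together with $\le\|w\|_2^2$ when $w\ge0$. Since exchangeability lets me reveal coordinates in any order, I am free to choose a favourable ordering. I would first try increasing order of $w$ in the nonnegative case, so that each $c_k\le0$ and $|c_k|\le$ the tail average. I would then expand $c_k^2$ and telescope with the identity behind $\sum_{k}\frac{1}{(N-k)(N-k+1)}$, which is where harmonic numbers and hence $H_N$ should enter.

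For signed weights I would write the left side as the quadratic form $w^\top(I-L)^\top(I-L)w$, where $L$ is the strictly upper-triangular averaging operator. I would then bound its operator norm on the complement of $\mathbf 1$; note $c$ is unchanged by shifting $w$, and $\|w-\bar w\|\le\|w\|$. Estimating this norm by Schur's test or Gershgorin with the weights $1/(N-k)$ should produce a constant of the form $1+(H_N-1)/(N-H_N)$. If the sharp constant does not come out this way, I would fall back on Barber's own argument, which refines this decomposition.
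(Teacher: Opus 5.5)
The paper gives no proof of this statement (it is quoted from \citet{foygel2024hoeffding}), so your argument has to stand on its own. Your reduction is sound. Conditioning on the multiset, the Doob martingale, the increment $D_k=(X_k-m_{k-1})c_k$ and Hoeffding's lemma do give the proxy $\frac{R^2}{4}\sum_{k<N}c_k^2$ for every ordering. The gap is the deterministic inequality, and both of your plans for it fail.

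In the nonnegative case, increasing order is the wrong choice. For $N=3$ and $w=(0,0,1)$ you get $c=(-\tfrac12,-1)$, so $\sum_k c_k^2=\tfrac54>\|w\|_2^2$. For $w=e_N$ the sum is $\sum_{j=1}^{N-1}j^{-2}\to\pi^2/6$, which for large $N$ exceeds even $1+\varepsilon_N$. Use decreasing order instead. If $w_1\ge\cdots\ge w_N\ge0$, the tail average lies in $[0,w_k]$, so $0\le c_k\le w_k$ and $\sum_{k<N}c_k^2\le\|w\|_2^2$ term by term. No telescoping or harmonic numbers are needed.

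For signed weights, no fixed ordering can give the constant, so a Schur or Gershgorin bound on $(I-L)^\top(I-L)$ cannot work. Let $S_k$ be the centered sum of squares of the unrevealed weights. Removing $w_k$ from that pool is a Welford update, $S_k-S_{k+1}=\frac{N-k}{N-k+1}c_k^2$. Summing gives
\[
\sum_{k<N}c_k^2=\|w-\bar w\mathbf 1\|_2^2+\sum_{k<N}\frac{c_k^2}{N-k+1}\le 2\,\|w-\bar w\mathbf 1\|_2^2 ,
\]
with equality for $w=e_{N-1}-e_N$, where only $c_{N-1}=2$ is nonzero. So for any fixed ordering your quadratic form has norm exactly $2$ on $\mathbf 1^\perp$, whereas $1+\varepsilon_N<2$ for $N\ge3$.

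The ordering must therefore depend on $w$ and front-load the large $|c_k|$. The identity also shows where $H_N$ enters: $1+\varepsilon_N=(N-1)/(N-H_N)$ is exactly the ratio $\sum_k c_k^2/\|w-\bar w\mathbf 1\|_2^2$ when all $|c_k|$ are equal. For instance, suppose you always reveal next the unrevealed weight farthest from the unrevealed mean. This forces $S_{k+1}\le\frac{N-k-1}{N-k}S_k$, and Abel summation then gives $\sum_k c_k^2\le\bigl(1+H_{N-1}/(N-1)\bigr)\|w-\bar w\mathbf 1\|_2^2$. That has the right order $1+O(\log N/N)$, but it is strictly larger than $1+\varepsilon_N$ for $N\ge3$.

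As written, your outline does not establish the stated signed-weight constant. The step you defer to Barber's argument is exactly the missing content.
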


Now let $w_1,\ldots,w_n$ be weights on the first $n$ coordinates and set $w_i=0$ for $i>n$.  Write $W=\sum_{i=1}^n w_i$ and define
\[
    T_{w,N}
    =
    \sum_{i=1}^n w_iX_i
    -
    W\bar X_N
    =
    \sum_{i=1}^N w_i(X_i-\bar X_N).
\]
As a linear statistic in $X_1,\ldots,X_N$, this is a zero-sum contrast with coefficient vector $w_i-W/N$ for $i\le n$ and $-W/N$ for $i>n$.  Its squared norm is
\[
    \sum_{i=1}^n w_i^2-\frac{W^2}{N}.
\]
\begin{corollary}[General zero-mixture weighted bound]\label{cor:weighted}
Under the assumptions of Theorem~\ref{thm:subsample-full},
\[
    \Pbb\{|T_{w,N}|\ge t\}
    \le
    2\exp\!\left[
        -\frac{2t^2}{
        R^2\left(\sum_{i=1}^n w_i^2-W^2/N\right)}
    \right],
\]
whenever the denominator is positive.  If the denominator is zero, then $T_{w,N}=0$ almost surely.
\end{corollary}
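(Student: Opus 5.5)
The plan mirrors the proof of Theorem~\ref{thm:subsample-full}, with the general weight vector in place of the uniform subsample weights. I will write $T_{w,N}=\sum_{j=1}^N a_jX_j$ with $a_j=w_j-W/N$ for $j\le n$ and $a_j=-W/N$ for $j>n$. The steps, in order, are:
\begin{enumerate}
\item Check $\sum_j a_j = W - N\cdot W/N = 0$.
\item Invoke Proposition~\ref{prop:zero-sum} (integrability is automatic since $X_i\in[a,b]$). This gives $\E[T_{w,N}\mid\Theta]=0$ almost surely, so $U_f\equiv 0$ and $\sigma_{\mathrm{mix}}^2=0$ is a valid proxy. The same fact gives $\E T_{w,N}=0$, so centering at the mean is centering at zero.
\end{enumerate}

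Next I compute the bounded-difference constants. Changing $X_j$ within $[a,b]$ changes $T_{w,N}$ by at most $c_j=R|a_j|$. Hence
\[
\sum_{j=1}^N c_j^2 = R^2\sum_{j=1}^N a_j^2 .
\]
Expanding the sum of squares,
\[
\sum_{j=1}^N a_j^2=\sum_{i=1}^n w_i^2-\frac{2W^2}{N}+N\frac{W^2}{N^2}=\sum_{i=1}^n w_i^2-\frac{W^2}{N}.
\]
This is the squared norm already recorded before the corollary. Theorem~\ref{thm:mcdiarmid-exch} with $\sigma_{\mathrm{mix}}^2=0$ then gives the effective proxy
\[
\sigma_{\mathrm{eff}}^2=\frac{R^2}{4}\Bigl(\sum_{i=1}^n w_i^2-\frac{W^2}{N}\Bigr).
\]
The tail bound $2\exp(-t^2/(2\sigma_{\mathrm{eff}}^2))$ is exactly the stated inequality whenever the denominator is positive.

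The degenerate case is the only point needing separate care. If $\sum_j a_j^2=0$, then every coefficient $a_j$ vanishes, so $T_{w,N}$ is identically the zero function and equals $0$ almost surely. Equivalently, all $c_j=0$ and the Theorem~\ref{thm:mcdiarmid-exch} MGF bound forces $T_{w,N}=\E T_{w,N}=0$. When $n<N$, this happens only when all $w_i$ equal $W/N$ and $W/N=0$, i.e.\ $w=0$.

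I do not expect a real obstacle. The one thing to state explicitly is that the $w_i$ are deterministic, so $f$ is a fixed function with the stated bounded-difference constants. Beyond that, the argument is the same computation as in the proof of Theorem~\ref{thm:subsample-full}.
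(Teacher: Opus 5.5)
Your proposal is correct and follows the paper's proof. You write $T_{w,N}$ as a zero-sum contrast, use Proposition~\ref{prop:zero-sum} to remove the mixture term, and apply Theorem~\ref{thm:mcdiarmid-exch} with $c_j=R|a_j|$ and the same sum-of-squares expansion. Your explicit handling of the degenerate case also fills in a step the paper leaves implicit: all $a_j=0$, which forces $w=0$ when $n<N$.
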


\begin{proof}
The proof is identical to Theorem~\ref{thm:subsample-full}.  The coefficient of $X_i$ is $w_i-W/N$ for $i\le n$ and $-W/N$ for $i>n$.  Therefore
\[
\begin{aligned}
\sum_{i=1}^n (w_i-W/N)^2+(N-n)(W/N)^2
&=
\sum_{i=1}^n w_i^2
-\frac{2W}{N}\sum_{i=1}^n w_i
+
\frac{nW^2}{N^2}
+
\frac{(N-n)W^2}{N^2}\\
&=
\sum_{i=1}^n w_i^2-\frac{W^2}{N}.
\end{aligned}
\]
\end{proof}

For the subsample mean, $w_i=1/n$, so $\|w\|_2^2=1/n$, $W=1$, and the denominator becomes
\[
    \frac1n-\frac1N=\frac{1-n/N}{n}.
\]

For a fixed finite vector of length $N$, applying the arbitrary signed-weight theorem of \citet{foygel2024hoeffding} directly to this zero-sum coefficient vector gives the same denominator multiplied by the finite-exchangeability factor from Theorem~\ref{thm:barber-hoeffding},
\[
    1+\varepsilon_N.
\]
The role of infinite exchangeability is that the observed vector can be viewed as the first $N$ coordinates of an exchangeable sequence of any longer length. Hence the same zero-sum coefficients can be padded with zeros and Barber's finite theorem can be applied to extensions of length $M\ge N$. The statistic is unchanged because the padded coefficients still sum to zero:
\[
    \sum_{i=1}^M a_i(X_i-\bar X_M)=\sum_{i=1}^M a_iX_i=T_{w,N}.
\]
The inflation factor is then $1+\varepsilon_M$, which tends to one as $M\to\infty$. This limiting argument explains why the infinite-extendible zero-sum bound in Figure~\ref{fig:subsample-tail} has the iid Hoeffding constant. Figure~\ref{fig:subsample-tail} shows the zero-sum bound that exploits the assumption of infinite extendibility and the direct finite-exchangeable bound; relative to the direct nonnegative-weight application with weights $w_i=1/n$ for $i\le n$, the zero-sum representation gains the finite-population factor
\[
    \frac{1}{1-n/N}.
\]

At $n=N$ the latter ratio diverges and the subsample-vs-full statistic is exactly zero. The present framework gives a complementary perspective by making the mixture-cancellation mechanism explicit, rather than obtaining it through a limiting argument.

\begin{figure}[h!]
    \centering
    \includegraphics[width=.80\textwidth]{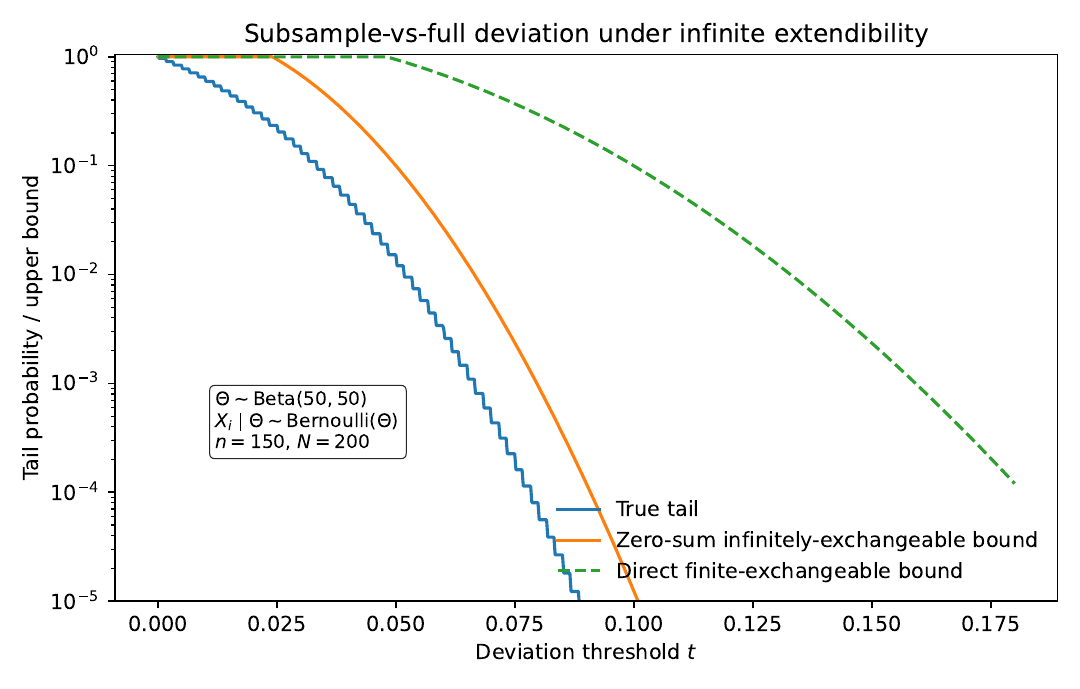}
    \caption{Subsample-vs-full deviation in the same Beta--Bernoulli de Finetti model as Figure~\ref{fig:exchangeable-mean-tail}, with \(n=150\) and \(N=200\). The zero-sum infinitely-exchangeable bound is Theorem~\ref{thm:subsample-full}. The direct finite-exchangeable bound is the application of \citet{foygel2024hoeffding} directly without exploiting the infinite extendibility assumption.}
    \label{fig:subsample-tail}
\end{figure}

\newpage

\section{Hierarchical benchmark scores and variance-proxy estimation}\label{sec:hierarchical}

MMLU and similar benchmarks are not simply exchangeable collections of individual items.  Their questions are organized into subjects, and the subjects are grouped into broader domains. This section introduces the hierarchy used in the empirical analysis and explains how we estimate the \emph{subgaussian variance proxies} that enter the concentration bound in Theorem~\ref{thm:mcdiarmid-exch}.

\subsection{A domain-aware benchmark hierarchy}

Let \(d=1,\ldots,D\) index high-level domains, such as STEM, humanities, social sciences, and other professional subjects. Within domain \(d\), let \(k=1,\ldots,K_d\) index subjects, and within subject \((d,k)\), let \(j=1,\ldots,m_{d,k}\) index questions. The binary variable \(X_{d,k,j}\in\{0,1\}\) records whether a fixed model answers question \((d,k,j)\) correctly.

The working hierarchy has an item layer and a subject layer.  Conditional on the subject-level success probability \(p_{d,k}\), item responses inside subject \((d,k)\) are i.i.d. Bernoulli:
\[
    X_{d,k,j}\mid p_{d,k}
    \stackrel{\mathrm{iid}}{\sim}\mathrm{Bernoulli}(p_{d,k}).
\]
Within a fixed domain \(d\), the subject-level probabilities are modeled as exchangeable draws directed by a domain-level factor \(\Psi_d\):
\[
    p_{d,k}\mid \Psi_d
    \stackrel{\mathrm{iid}}{\sim}G_{d,\Psi_d}.
\]
For the empirical MMLU analysis, the four domains are treated as fixed strata. This avoids estimating a top-level domain mixture from only four observed domains. A model that imposes a general academic ability factor above the domains can still be handled by adding the corresponding top-level mixture proxy.

The theory is easiest to state for a general weighted benchmark score.  Let
\[
    \bar X_{d,k}=\frac1{m_{d,k}}\sum_{j=1}^{m_{d,k}}X_{d,k,j},
    \qquad
    Z_a=\sum_{d=1}^D\sum_{k=1}^{K_d} a_{d,k}\bar X_{d,k},
    \qquad
    a_{d,k}\ge0,
    \quad
    \sum_{d,k}a_{d,k}=1.
\]
The subject-macro MMLU score (the average of the subject-level accuracies) corresponds to \(a_{d,k}=1/K\), where \(K=\sum_dK_d\). The pooled item score (the average over all benchmark questions) corresponds to \(a_{d,k}=m_{d,k}/N\), where \(N=\sum_{d,k}m_{d,k}\). It is called the pooled score because all individual questions are pooled together into one large collection before averaging. The same theorem covers both scores.

\begin{corollary}[Hierarchical bounded-difference bound]
\label{cor:fixed-stratum-hierarchical}
Consider the fixed-domain hierarchy
\[
    \Psi_d,
    \qquad
    p_{d,k}\mid\Psi_d\stackrel{\mathrm{ind}}{\sim}G_{d,\Psi_d},
    \qquad
    X_{d,k,j}\mid p_{d,k}\stackrel{\mathrm{ind}}{\sim}{\rm Bernoulli}(p_{d,k}).
\]
Let
\[
    Z_a=\sum_{d,k}a_{d,k}\bar X_{d,k},
    \qquad
    \bar X_{d,k}=\frac1{m_{d,k}}\sum_{j=1}^{m_{d,k}}X_{d,k,j}.
\]
Write
\[
    m_d(\Psi_d)=\E[p_{d,k}\mid\Psi_d].
\]
Assume that, for each domain \(d\),
\[
    \E\!\left[
        \exp\left\{
            \lambda\bigl(p_{d,k}-m_d(\Psi_d)\bigr)
        \right\}
        \,\middle|\,\Psi_d
    \right]
    \le
    \exp\!\left(\frac{\lambda^2 s_d^2}{2}\right),
    \qquad \lambda\in\R,
\]
for a subject-level proxy \(s_d^2\). Also assume that the fixed-stratum domain fluctuation
\[
    U_{\rm dom}
    =
    \sum_{d=1}^D\sum_{k=1}^{K_d} a_{d,k}m_d(\Psi_d)
    -
    \E\!\left[
        \sum_{d=1}^D\sum_{k=1}^{K_d} a_{d,k}m_d(\Psi_d)
    \right]
\]
is \(\sigma_{\rm dom}^2\)-subgaussian. Then
\[
    \Pbb\{|Z_a-\E Z_a|\ge t\}
    \le
    2\exp\!\left(
        -\frac{t^2}{2\sigma_{\rm hier}^2}
    \right),
\]
where
\[
    \sigma_{\rm hier}^2
    =
    \underbrace{
    \frac14
    \sum_{d,k,j}
    \left(\frac{a_{d,k}}{m_{d,k}}\right)^2
    }_{\text{item-level bounded-difference proxy}}
    +
    \underbrace{
    \sum_{d=1}^D
    s_d^2\sum_{k=1}^{K_d}a_{d,k}^2
    }_{\text{within-domain subject proxy}}
    +
    \underbrace{\sigma_{\rm dom}^2}_{\text{domain mixture proxy}}.
\]
\end{corollary}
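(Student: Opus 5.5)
We prove the corollary by a three-level telescoping of the deviation, mirroring the proof of Theorem~\ref{thm:mcdiarmid-exch} but iterating the conditioning twice. Write $\mathbf p=(p_{d,k})_{d,k}$ and $\Psi=(\Psi_1,\ldots,\Psi_D)$, and decompose
\[
    Z_a-\E Z_a
    =
    \underbrace{Z_a-\E[Z_a\mid\mathbf p]}_{W_{\rm item}}
    +
    \underbrace{\E[Z_a\mid\mathbf p]-\E[Z_a\mid\Psi]}_{W_{\rm subj}}
    +
    \underbrace{\E[Z_a\mid\Psi]-\E Z_a}_{U_{\rm dom}}.
\]
Here we use $\E[Z_a\mid\mathbf p]=\sum_{d,k}a_{d,k}p_{d,k}$ and $\E[Z_a\mid\Psi]=\sum_{d,k}a_{d,k}m_d(\Psi_d)$. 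The latter requires that the items depend on $\Psi$ only through $\mathbf p$, i.e.\ $X\perp\Psi\mid\mathbf p$, which is how the hierarchy is specified. So the third term is exactly the $U_{\rm dom}$ of the statement.

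The proof then bounds the moment generating function from the inside out, by nested conditioning.

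\textbf{Item layer.} Conditional on $(\mathbf p,\Psi)$, the items are independent Bernoulli variables. $Z_a$ has bounded-difference constants $c_{d,k,j}=a_{d,k}/m_{d,k}$ in the item coordinates. The conditional McDiarmid moment bound gives
\[
\E[e^{\lambda W_{\rm item}}\mid\mathbf p,\Psi]\le\exp\Bigl(\tfrac{\lambda^2}{8}\sum_{d,k,j}(a_{d,k}/m_{d,k})^2\Bigr).
\]

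\textbf{Subject layer.} $W_{\rm subj}=\sum_{d,k}a_{d,k}\bigl(p_{d,k}-m_d(\Psi_d)\bigr)$ is $\Psi$-measurable-plus-$\mathbf p$. Conditional on $\Psi$, the $p_{d,k}$ are independent, each centered term $p_{d,k}-m_d(\Psi_d)$ is $s_d^2$-subgaussian by assumption, and a weighted sum of independent subgaussians has proxy $\sum_d s_d^2\sum_k a_{d,k}^2$. Hence
\[
\E[e^{\lambda W_{\rm subj}}\mid\Psi]\le\exp\Bigl(\tfrac{\lambda^2}{2}\sum_d s_d^2\sum_k a_{d,k}^2\Bigr).
\]

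\textbf{Domain layer.} $\E e^{\lambda U_{\rm dom}}\le e^{\lambda^2\sigma_{\rm dom}^2/2}$ by hypothesis.

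Combining these via the tower property, pulling out measurable factors at each stage, gives
\[
\E e^{\lambda(Z_a-\E Z_a)}\le\exp\bigl(\lambda^2\sigma_{\rm hier}^2/2\bigr).
\]
Chernoff's bound applied to $\pm(Z_a-\E Z_a)$ then yields the two-sided tail.

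The only delicate step is justifying the conditional structure. First, conditioning on $\mathbf p$ (jointly with $\Psi$) must leave the items independent with the stated Bernoulli laws; I read ``$\stackrel{\mathrm{ind}}{\sim}$'' in the hierarchy as conditional independence given everything above. Second, the subjects must be conditionally independent across domains given $\Psi$; I would make this explicit as part of the model, which is what the fixed-stratum hierarchy intends. Everything else is routine algebra. The deterministic bounded-difference constants ensure the conditional McDiarmid bound is uniform in $\mathbf p$, which is what lets it be pulled out of the outer expectations.
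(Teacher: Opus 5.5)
Your proposal is correct and follows essentially the same route as the paper: the same three-term telescoping through $\E[Z_a\mid\mathbf p]$ and $\E[Z_a\mid\Psi]$, conditional McDiarmid at the item layer, additivity of subgaussian proxies for the conditionally independent subject terms, and the domain hypothesis, all combined by nested conditioning and Chernoff's bound. Your explicit statement of $X\perp\Psi\mid\mathbf p$ and of cross-domain conditional independence of the $p_{d,k}$ given $\Psi$ makes precise two assumptions that the paper's proof uses only implicitly.
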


\begin{proof}
Decompose
\[
    Z_a-\E Z_a
    =
    \{Z_a-\E[Z_a\mid p]\}
    +
    \{\E[Z_a\mid p]-\E[Z_a\mid\Psi]\}
    +
    \{\E[Z_a\mid\Psi]-\E Z_a\}.
\]
Conditional on \(p=\{p_{d,k}\}\), the item responses are independent.  Changing item \(X_{d,k,j}\) changes \(Z_a\) by at most \(a_{d,k}/m_{d,k}\), which gives the first proxy term.  Next,
\[
    \E[Z_a\mid p]-\E[Z_a\mid\Psi]
    =
    \sum_{d=1}^D\sum_{k=1}^{K_d}
    a_{d,k}\bigl(p_{d,k}-m_d(\Psi_d)\bigr).
\]
Conditional on \(\Psi\), these subject-level terms are independent within each domain, so their subgaussian proxies add and give \(\sum_d s_d^2\sum_k a_{d,k}^2\).  The last term has proxy \(\sigma_{\rm dom}^2\) by assumption.  Multiplying the three conditional moment bounds and applying Chernoff's method proves the claim.
\end{proof}

\subsection{Estimating a subject-level subgaussian proxy}

For a fixed model and fixed domain \(d\), let
\[
    Y_{d,k}=\sum_{j=1}^{m_{d,k}}X_{d,k,j}
\]
be the number of correct answers in subject \(k\).  The beta--binomial working model assumes
\[
    p_{d,k}\stackrel{\mathrm{iid}}{\sim}\mathrm{Beta}(\alpha_d,\beta_d),
    \qquad
    Y_{d,k}\mid p_{d,k}\sim\mathrm{Binomial}(m_{d,k},p_{d,k}).
\]
After integrating out \(p_{d,k}\), the marginal likelihood is
\[
    L_d(\alpha_d,\beta_d)
    =
    \prod_{k=1}^{K_d}
    {m_{d,k}\choose Y_{d,k}}
    \frac{B(Y_{d,k}+\alpha_d,m_{d,k}-Y_{d,k}+\beta_d)}{B(\alpha_d,\beta_d)},
\]
and we compute the maximum-likelihood fit
\[
    (\hat\alpha_d,\hat\beta_d)
    =
    \arg\max_{\alpha_d,\beta_d>0}L_d(\alpha_d,\beta_d).
\]

The concentration bound needs a subgaussian proxy for \(p_{d,k}-\E p_{d,k}\), not merely the ordinary Beta variance.  For \(P_{\alpha,\beta}\sim\mathrm{Beta}(\alpha,\beta)\), the variance is
\[
    v_{\rm Beta}^2(\alpha,\beta)
    =
    \frac{\alpha\beta}{(\alpha+\beta)^2(\alpha+\beta+1)}.
\]
This number is always a lower bound on any valid subgaussian proxy, but it is generally not itself a valid proxy.  Marchal and Arbel show that the centered Beta distribution is strictly subgaussian, in the sense that the optimal proxy equals the variance, only in the symmetric case \(\alpha=\beta\) \citep{marchal2017sub}.

The empirical analysis uses the closed-form Marchal--Arbel upper bound
\begin{equation}\label{eq:marchal-arbel-bound}
    \E\exp\!\left\{\lambda\bigl(P_{\alpha,\beta}-\E P_{\alpha,\beta}\bigr)\right\}
    \le
    \exp\!\left(
        \frac{\lambda^2}{2}\cdot
        \frac{1}{4(\alpha+\beta+1)}
    \right),
    \qquad \lambda\in\mathbb R,
\end{equation}
and plugs in
\begin{equation}
    \widehat s_d^2
    =
    \frac{1}{4(\hat\alpha_d+\hat\beta_d+1)}.
\end{equation}

The plug-in step has the usual model-based asymptotic interpretation.  Suppose, for a fixed domain \(d\), that the beta--binomial model is correctly specified, the true parameter \(\theta_{0,d}=(\alpha_{0,d},\beta_{0,d})\) lies in the interior of \((0,\infty)^2\), and the independent subject-level observations satisfy the standard regularity and nonsingular Fisher-information conditions for maximum likelihood estimation with nonidentically distributed observations.  Then the standard result from MLE theory gives
\[
    \sqrt{K_d}\{(\hat\alpha_d,\hat\beta_d)-\theta_{0,d}\}
    \Rightarrow
    N(0,I_d(\theta_{0,d})^{-1}),
\]
where \(I_d(\theta_{0,d})\) is the limiting per-subject Fisher information \citet{vandervaart1998asymptotic}.  Since
\[
    h(\alpha,\beta)=\frac{1}{4(\alpha+\beta+1)}
\]
is smooth on \((0,\infty)^2\), the delta method gives
\[
    \widehat s_d^2
    =
    h(\theta_{0,d})+O_{\mathbb P}(K_d^{-1/2}).
\]
Consequently the contribution \(\widehat s_d^2/K_d\) to a within-domain macro-average proxy differs from the corresponding oracle plug-in contribution by \(O_{\mathbb P}(K_d^{-3/2})\). Corollary~\ref{cor:fixed-stratum-hierarchical} remains a finite-sample concentration statement when valid deterministic proxies are supplied; the fitted MMLU half-widths in Section~\ref{sec:mmlu} are model-based plug-in summaries. The plug-in step introduces additional estimation variability not reflected in the reported intervals. At the same time, the formula~\ref{eq:marchal-arbel-bound} we use for the beta-Bernoulli proxy is itself an upper bound on the relevant subgaussian proxy, which makes this part of the procedure conservative relative to the fitted model.

\section{MMLU response-data analysis}\label{sec:mmlu}

\subsection{Experimental Setup and Data Collection}
To evaluate the hierarchical exchangeability model and the subsample-vs-full concentration bounds, we collect item-level correctness data on the Massive Multitask Language Understanding (MMLU) benchmark \citep{hendrycks2020measuring}. Specifically, we evaluate the Gemma 2 2B, Gemma 2 9B~\citep{gemma2_2024}, Gemma 3 4B~\citep{gemma3_2025}, Qwen3 0.6B, Qwen3 1.7B, and Qwen3 8B~\cite{qwen3_2025} models using \textit{Simply} \citep{Liang2025Simply}, a JAX-based research codebase for transformers developed by Google DeepMind.
We run a 5-shot multiple-choice evaluation across all 14,042 questions in the MMLU test set. For each question, the model's next-token probabilities for the option letters (A, B, C, D) are computed, and the option with the highest probability is treated as the prediction.  The analysis uses four fixed domains: STEM, Humanities, Social Science, and Other.

\subsection{Parsed scores and subject heterogeneity}

Table~\ref{tab:mmlu-scores} reports both pooled item accuracy and subject-macro accuracy. Recall the pooled score is the average over all $14{,}042$ items.  The macro score is
\[
    Z=\frac1K\sum_{k=1}^K \bar X_k,
    \qquad K=57,
\]
the average of the subject-level accuracies. 

\begin{table}[h!]
\centering
\caption{Parsed MMLU scores from the uploaded response data. }
\label{tab:mmlu-scores}
\begin{tabular}{lrrrr}
\toprule
Model & Pooled accuracy & Macro accuracy & Subject ``SD'' & Subject range \\
\midrule
Qwen3 0.6B & 45.09\% & 46.53\% & 9.72\% & 24.6--64.6\% \\
Gemma 2 2B & 53.23\% & 54.61\% & 15.01\% & 22.1--81.2\% \\
Gemma 3 4B & 58.23\% & 59.54\% & 16.89\% & 26.5--87.2\% \\
Qwen3 1.7B & 58.25\% & 60.27\% & 11.67\% & 33.6--82.9\% \\
Gemma 2 9B & 71.42\% & 72.54\% & 15.52\% & 39.0--95.3\% \\
Qwen3 8B & 74.36\% & 76.44\% & 12.28\% & 45.0--93.8\% \\
\bottomrule
\end{tabular}
\end{table}
Here the subject ``standard deviation'' is reported only as a descriptive empirical fluctuation measure: it uses the usual i.i.d.-style sample variance across subjects, even though the motivating premise of this paper is precisely that subject accuracies may be exchangeably dependent rather than independent. The subject ``SDs'' are large, which is the first empirical sign that treating all questions as independent and homogeneous is misleading: model performance varies substantially by subject.

\subsection{Infinite exchangeability diagnostics}
\label{sec:mmlu-exchangeability-diagnostics}

In the MMLU analysis, we apply Corollary~\ref{cor:fixed-stratum-hierarchical}, which assumes a domain-aware de Finetti hierarchy described in Section~\ref{sec:hierarchical}: within each fixed domain, subjects are modeled as an infinitely exchangeable sequence, and within each subject, questions are modeled as conditionally independent Bernoulli draws given a subject-level latent accuracy.  The purpose of this subsection is to to present diagnostics supporting these modeling assumptions.

The diagnostics below are not formal tests of infinite extendibility. Instead, they check observable implications that would be expected under the hierarchy used in Corollary~\ref{cor:fixed-stratum-hierarchical}. They also help diagnose whether the four-domain stratification is a reasonable working approximation.

For a fixed model, suppress the model index and write the working hierarchy as
\[
    \Psi_d
    \longrightarrow
    p_{d,1},\ldots,p_{d,K_d}\mid \Psi_d
    \longrightarrow
    X_{d,k,1},\ldots,X_{d,k,m_{d,k}}\mid p_{d,k}.
\]
Here \(d\) indexes one of the four fixed MMLU domains, \(k\) indexes a subject inside that domain, and \(j\) indexes a question inside that subject.  The model used in Corollary~\ref{cor:fixed-stratum-hierarchical} is
\[
    p_{d,k}\mid \Psi_d
    \stackrel{\mathrm{iid}}{\sim}
    G_{d,\Psi_d},
    \qquad
    X_{d,k,j}\mid p_{d,k}
    \stackrel{\mathrm{ind}}{\sim}
    \mathrm{Bernoulli}(p_{d,k}).
\]

A useful necessary implication of infinite exchangeability is nonnegative covariance.  Proposition~\ref{prop:positive-correlation} shows that an infinitely exchangeable square-integrable sequence has nonnegative pairwise covariance.  Applied heuristically to the hierarchy above, two fresh questions from the same subject have
\[
    \Cov(X_{d,k,j},X_{d,k,\ell})
    =
    \Var(p_{d,k})
    \ge 0,
    \qquad j\ne \ell,
\]
and two fresh questions from two subjects in the same domain share a nonnegative domain-level covariance contribution. Therefore, strongly negative dependence inside the proposed exchangeability blocks would be evidence against the working hierarchy. Positive dependence does not prove exchangeability, but it is a basic sanity check.

The difficulty is that direct pairwise correlations between fixed questions are not identifiable from a single benchmark administration: for each fixed model, each question is observed only once.  The diagnostics below therefore use quantities that are identifiable from the item-level response matrix.

\paragraph{Split-half reliability within domains.}

The first diagnostic checks whether subject accuracies are stable when each subject is measured using two disjoint random halves of its questions.  For a fixed split repetition \(b\), divide the item set of subject \(k\) into two halves \(A_k^{(b)}\) and \(B_k^{(b)}\), and define
\[
    \hat p_{d,k}^{A,b}
    =
    \frac1{|A_k^{(b)}|}
    \sum_{j\in A_k^{(b)}} X_{d,k,j},
    \qquad
    \hat p_{d,k}^{B,b}
    =
    \frac1{|B_k^{(b)}|}
    \sum_{j\in B_k^{(b)}} X_{d,k,j}.
\]
For each domain \(d\), let \(D_d\) be the set of subjects in that domain and compute
\[
    r_d^{(b)}
    =
    \operatorname{corr}_{k\in D_d}
    \left(
        (\hat p_{d,k}^{A,b})_{k\in D_d},
        (\hat p_{d,k}^{B,b})_{k\in D_d}
    \right),
\]
whenever both vectors have nonzero empirical variance.  We repeat this random splitting procedure 500 times and report the mean split-half correlation for each model and domain.

The intuitive interpretation is simple.  If subject \(k\) has a stable latent accuracy \(p_{d,k}\), then two random halves of its questions should give two noisy estimates of the same quantity.  Subjects that are relatively easy for the model in one half should therefore tend to be relatively easy in the other half.  Large positive split-half correlations indicate that the subject profile is reproducible across item splits.  Values near zero would suggest that the observed subject differences are dominated by item noise.  Negative values would suggest instability or a poor fit to the proposed hierarchy.

The same point can be expressed formally in the idealized infinite-population version of the model.  Conditional on \(p_{d,k}\), two independently drawn half-tests from subject \((d,k)\) have the same conditional mean \(p_{d,k}\) and independent residual item noise.  Therefore, for a randomly selected subject inside domain \(d\),
\[
    \Cov(\hat p_{d,k}^{A},\hat p_{d,k}^{B}\mid \Psi_d)
    =
    \Var(p_{d,k}\mid \Psi_d)
    \ge 0.
\]
This is the empirical feature that the split-half diagnostic is trying to detect.

\begin{table}[h!]
\centering
\caption{Per-model split-half correlations within MMLU domains.  Each entry is the mean, across 500 random splits, of the Pearson correlation between two half-item subject-accuracy vectors restricted to that domain.  The diagnostic checks whether subject-level performance profiles are stable across two independent-looking measurements of the same subjects.}
\label{tab:mmlu-domain-split-half}
\begin{tabular}{lrrrr}
\toprule
Model & STEM & Humanities & Social Science & Other \\
\midrule
Qwen3 0.6B & 0.659 & 0.827 & 0.717 & 0.753 \\
Gemma 2 2B & 0.831 & 0.906 & 0.887 & 0.902 \\
Gemma 3 4B & 0.857 & 0.946 & 0.910 & 0.884 \\
Qwen3 1.7B & 0.772 & 0.885 & 0.802 & 0.825 \\
Gemma 2 9B & 0.889 & 0.941 & 0.897 & 0.893 \\
Qwen3 8B & 0.845 & 0.910 & 0.827 & 0.910 \\
\bottomrule
\end{tabular}
\end{table}

There is one finite-population caveat.  In the idealized argument above, the two half-tests are two fresh samples from the subject-level question population.  In the empirical implementation, the two halves are complementary subsets of the realized finite item set.  Conditional on the realized responses \(x_{k,1},\ldots,x_{k,m_k}\) for a single subject, complementary half means are negatively associated.  If \(|A|=a\), \(B=A^c\), and
\[
    s_k^2
    =
    \frac1{m_k-1}
    \sum_{j=1}^{m_k}
    (x_{k,j}-\bar x_k)^2,
\]
then
\[
    \operatorname{Cov}_{\rm split}
    \left(
        \bar x_{k,A},
        \bar x_{k,B}
        \mid x_{k,1:m_k}
    \right)
    =
    -\frac{s_k^2}{m_k}.
\]
Thus, within one fixed subject, complementary splitting introduces a small negative finite-population effect.  The reported statistic is different: it is the correlation across subjects of the two half-score vectors.  At the covariance level, this combines a positive between-subject component with the negative finite-split component.  Positive values should therefore be interpreted as conservative evidence that stable subject-level heterogeneity is present.

\paragraph{Model-centered subject--subject heatmap.}

The second diagnostic is descriptive rather than a direct check of Theorem~\ref{thm:mcdiarmid-exch}.  It asks whether the chosen MMLU domain stratification captures visible structure in subject performance. 

For each subject \(s\), let \(\hat p_{m,s}\) be the accuracy of model \(m\) on subject \(s\).  The raw subject profile
\[
    v_s
    =
    (\hat p_{1,s},\ldots,\hat p_{6,s})
\]
is dominated by overall model strength: stronger models tend to do better on almost every subject.  To display residual subject structure, we subtract each model's subject-macro average:
\[
    \tilde p_{m,s}
    =
    \hat p_{m,s}
    -
    \frac1{57}
    \sum_{u=1}^{57}\hat p_{m,u},
    \qquad
    \tilde v_s
    =
    (\tilde p_{1,s},\ldots,\tilde p_{6,s}).
\]
The centered heatmap entry for subjects \(s\) and \(t\) is
\[
    r^{\rm cent}_{s,t}
    =
    \operatorname{Corr}_m(\tilde p_{m,s},\tilde p_{m,t})
    =
    \operatorname{Corr}(\tilde v_s,\tilde v_t).
\]
This display asks whether two subjects are relatively easy or difficult together
after each model's overall MMLU level is removed.  Centering is necessary because
stronger models perform better on nearly every subject, so the raw subject
profiles would exhibit trivial positive dependence driven by overall model
strength rather than by any subject-specific relationship.  Subtracting each
model's subject-macro average instead poses the relative question: among models
adjusted for their own overall level, does being comparatively strong on one
subject coincide with being comparatively strong on another?  This is not an
item- or subject-level covariance under the infinite-exchangeability model, but a
descriptive diagnostic computed from only six evaluated models---analogous to
taking six students' math and physics scores, removing each student's overall
ability, and asking whether relative strength in the two subjects is associated.

The following heatmap in Figure~\ref{fig:subject-correlation-heatmap-centered} should not be read as a formal exchangeability test. It uses only six models, and the correlations are across models rather than across repeated benchmark draws for one model. Its role is to check whether the fixed domain partition used in the empirical analysis is qualitatively reasonable. Clear within-domain blocks suggest that subjects in the same coarse domain share residual structure.  Diffuse or contradictory patterns would suggest that the four-domain stratification is too crude.


\begin{remark}[Coarse domain labels]
The MMLU domains are coarse.  For example, college biology and college mathematics both belong to STEM, but they may require different skills: biology relies more on factual scientific knowledge and domain vocabulary, while mathematics relies more on symbolic manipulation and abstract reasoning.  A negative centered association between such subjects is therefore not surprising. It is better interpreted as a caveat about the coarseness of the STEM stratum. In the empirical analysis below, we use the simpler working approximation that subjects are approximately infinitely exchangeable within each domain, directed by a domain-level ability factor.
\end{remark}

\begin{figure}[h!]
    \centering    \includegraphics[width=.86\textwidth]{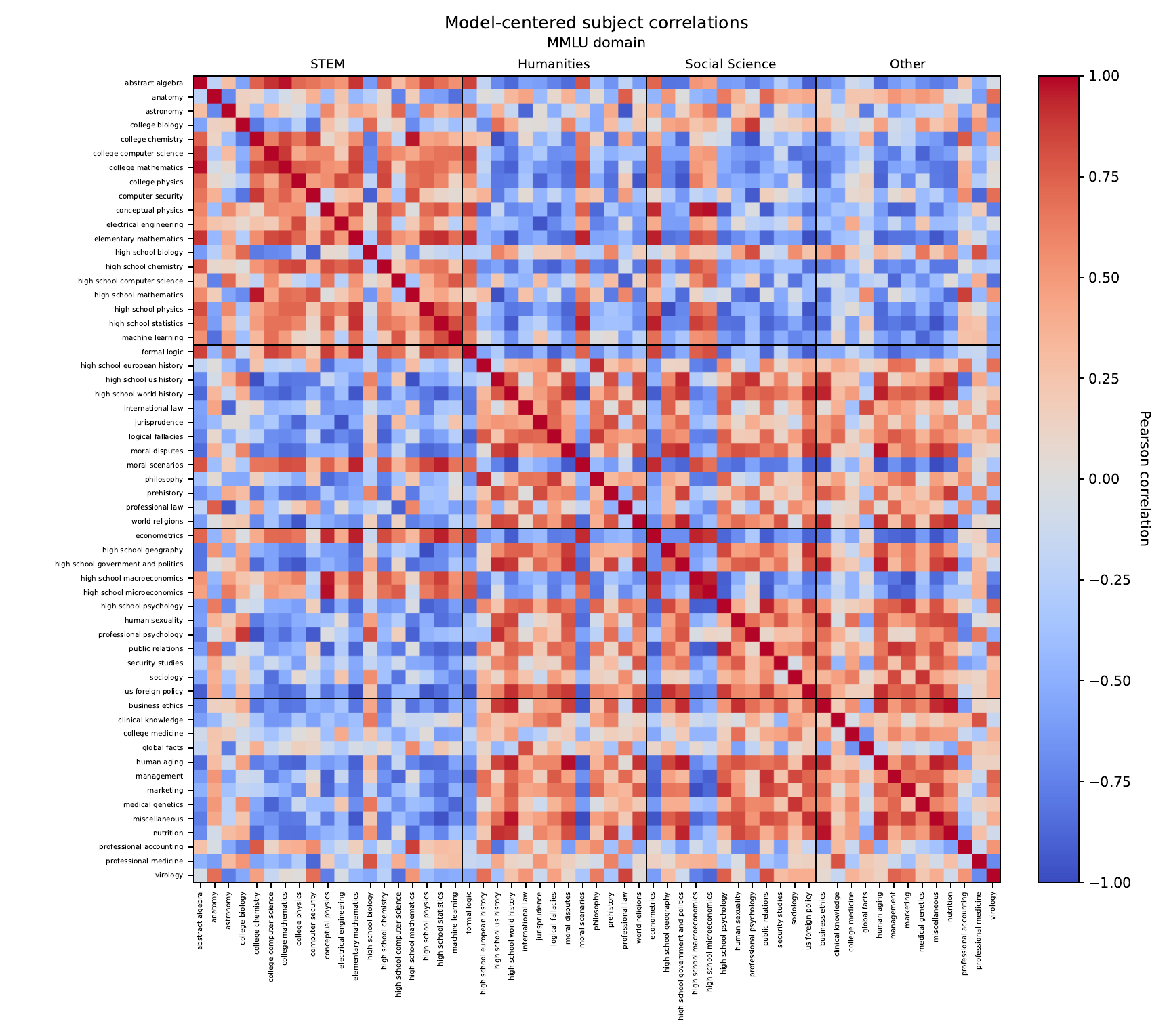}
    \caption{Model-centered, domain-ordered heatmap of pairwise correlations between MMLU subjects.  Each entry is the Pearson correlation between centered subject profiles \(\tilde v_s\) and \(\tilde v_t\), where each model's overall subject-macro accuracy has been removed. The domain labels indicate the fixed strata used in the hierarchical analysis. Red within-domain blocks indicate subjects that are relatively easy or difficult together after removing global model strength. Black indicates near-zero residual association and blue indicates negative residual association.}
    \label{fig:subject-correlation-heatmap-centered}
\end{figure}

In summary, these diagnostics support the empirical use of the fixed-domain hierarchy in Corollary~\ref{cor:fixed-stratum-hierarchical}.  The split-half correlations show that subject-level performance profiles are stable within domains, so the within-domain subject-mixture term is empirically meaningful.  The centered heatmap shows residual domain structure and highlights where the four MMLU domains are useful but coarse.  Neither diagnostic proves infinite exchangeability; rather, they check observable consequences and possible failure modes of the modeling assumptions before the concentration bound is applied.

\subsection{Applying the hierarchical exchangeable bound to the subject-macro score}

We now apply Corollary~\ref{cor:fixed-stratum-hierarchical} to the subject-macro MMLU score.  For this target,
\[
    Z=\frac1K\sum_{k=1}^{K}\bar X_k,
    \qquad K=57,
\]
so the weights in Corollary~\ref{cor:fixed-stratum-hierarchical} are \(a_{d,k}=1/K\).  This is the score for an average subject: each subject receives equal weight, regardless of how many questions it contains. Similarly, the pooled item score is also covered by the same corollary by using \(a_{d,k}=m_{d,k}/N\); we focus on the subject-macro score for notational simplicity.

For the subject-macro score, the item-level bounded-difference proxy is
\[
    \sigma_{\rm item,BD}^2
    =
    \frac{1}{4K^2}\sum_{k=1}^K\frac1{m_k}.
\]
Under the fitted domain-wise beta--binomial working model, let
\[
    \widehat s_{{\rm Beta},m,d}^2
    =
    \frac{1}{4(\hat\alpha_{m,d}+\hat\beta_{m,d}+1)}
\]
denote the Marchal--Arbel subject-level subgaussian proxy for model \(m\) in domain \(d\), as defined in Section~\ref{sec:hierarchical}.  Since there are \(K_d\) equally weighted subjects in domain \(d\), the within-domain subject proxy is
\[
    \widehat\sigma_{{\rm subj}\mid{\rm dom},m}^2
    =
    \sum_{d=1}^D
    \frac{K_d}{K^2}\widehat s_{{\rm Beta},m,d}^2
    =
    \sum_{d=1}^D
    \omega_d^2\frac{\widehat s_{{\rm Beta},m,d}^2}{K_d},
    \qquad
    \omega_d=\frac{K_d}{K}.
\]
The plug-in effective proxy is
\[
    \widehat\sigma_{{\rm eff},m}^2
    =
    \sigma_{\rm item,BD}^2
    +
    \widehat\sigma_{{\rm subj}\mid{\rm dom},m}^2,
\]
and the reported \(1-\alpha\) half-width is
\[
    h_m(\alpha)
    =
    \sqrt{2\widehat\sigma_{{\rm eff},m}^2\log\frac{2}{\alpha}},
    \qquad \alpha=0.05.
\]

Table~\ref{tab:mmlu-theorem-uq} reports this model-based beta-proxy half-width alongside the conventional iid item baseline \(1.96\sqrt{\hat p(1-\hat p)/N}\), which treats the \(14{,}042\) item responses as independent.

\begin{table}[H]
\centering
\caption{95\% uncertainty half-widths for the subject-macro MMLU score. The iid item half-width is the conventional baseline computed as if item responses were independent. The exchangeable beta-proxy half-width applies Corollary~\ref{cor:fixed-stratum-hierarchical} with fitted domain-wise beta--binomial subgaussian proxies.}
\label{tab:mmlu-theorem-uq}
\begin{tabular}{lrrr}
\toprule
Model & Macro score & iid item half-width & \makecell{Exchangeable half-width\\beta proxy} \\
\midrule
Qwen3 0.6B & 46.53\% & 0.82\% & 3.29\% \\
Gemma 2 2B & 54.61\% & 0.83\% & 4.79\% \\
Gemma 3 4B & 59.54\% & 0.82\% & 5.20\% \\
Qwen3 1.7B & 60.27\% & 0.82\% & 4.01\% \\
Gemma 2 9B & 72.54\% & 0.75\% & 5.26\% \\
Qwen3 8B & 76.44\% & 0.72\% & 4.70\% \\
\bottomrule
\end{tabular}

\end{table}

\subsection{Applying the subsample-vs-full theorem to cost-saving evaluation}

We next consider the cost-saving problem.  Suppose a practitioner wants the
full pooled MMLU score
\[
    \bar X_N=\frac1N\sum_{i=1}^N X_i,
    \qquad N=14{,}042,
\]
but evaluates only a uniformly selected subset of \(n\) items.  The estimation
error is
\[
    \bar X_n-\bar X_N.
\]
This is a zero-sum linear contrast, so the de Finetti mixture term cancels
exactly. For binary correctness variables, Theorem~\ref{thm:subsample-full}
gives
\[
    \Pbb\{|\bar X_n-\bar X_N|\ge t\}
    \le
    2\exp\left(-\frac{2t^2nN}{N-n}\right),
    \qquad n<N.
\]
Equivalently, with probability at least \(1-\alpha\),
\[
    |\bar X_n-\bar X_N|
    \le
    h_{n,N}(\alpha)
    :=
    \sqrt{\frac{N-n}{2nN}\log\frac2\alpha}.
\]
This half-width depends only on \(n\), \(N\), and the range of the response
variable. It is therefore common across the six evaluated models.  The
model-specific quantity is the empirical distribution of
\(|\bar X_n-\bar X_N|\) for each realized response vector.

Figure~\ref{fig:subsample-power} displays the resulting cost--accuracy tradeoff.
Panel (a) compares Theorem~\ref{thm:subsample-full} with the direct nonnegative-weight application of the finite-vector exchangeable bound of \citet{foygel2024hoeffding}.  The gain from using the zero-sum representation becomes largest when \(n\) is a non-negligible fraction of \(N\), and the theorem half-width vanishes at \(n=N\), where \(\bar X_n=\bar X_N\) deterministically.  Barber's signed-weight theorem applied to longer finite extensions has the same limiting exponent as Theorem~\ref{thm:subsample-full}.
Panel (b) overlays the theorem half-width with the empirical \(95\%\) errors
computed separately for each of the six realized MMLU response vectors.  The
empirical curves lie below the theorem curve for every model.

\begin{figure}[h!]
    \centering
    \includegraphics[width=\textwidth]{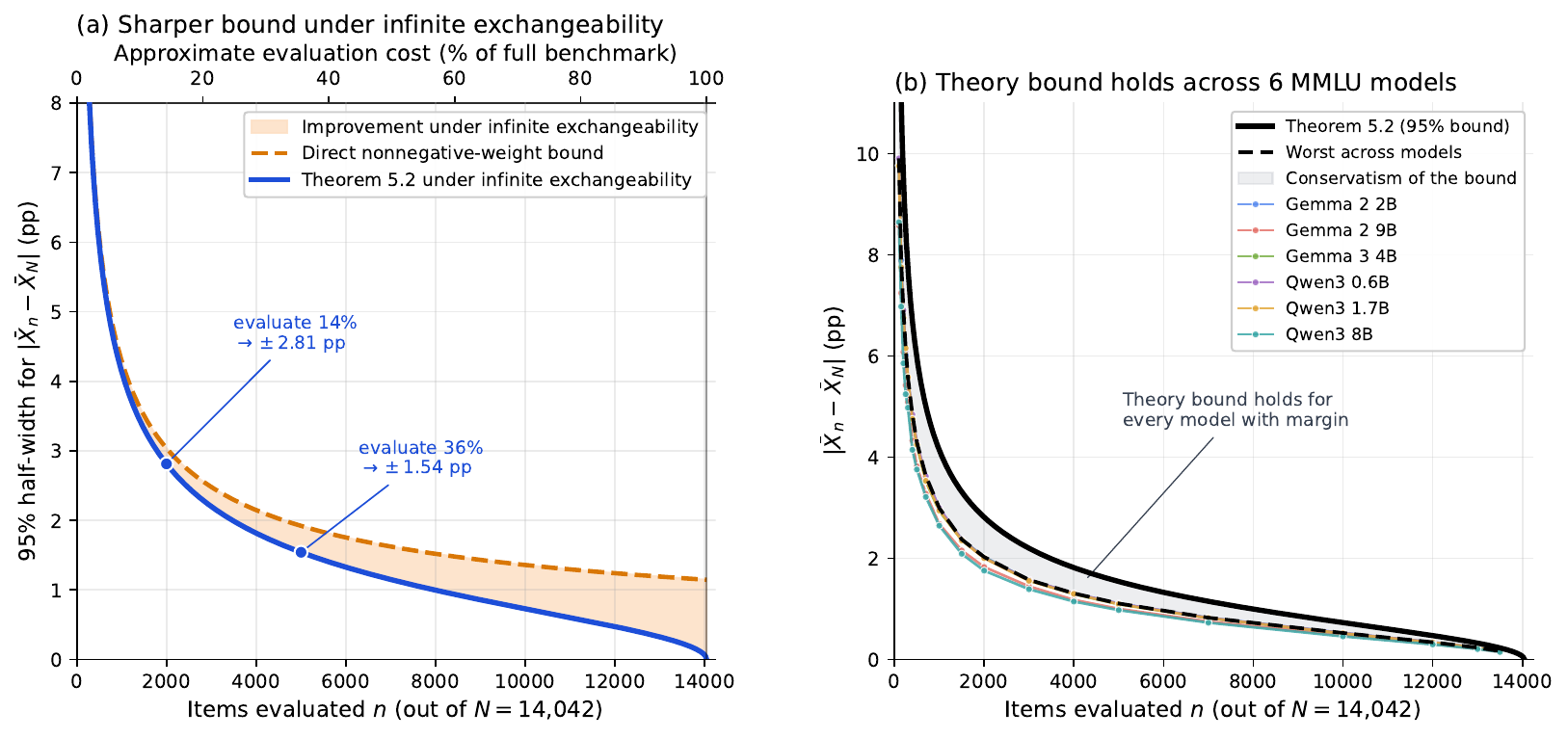}
    \caption{Cost--accuracy tradeoff for estimating the full MMLU pooled score
    from a random subset.  Panel (a) compares the \(95\%\) half-width from
    Theorem~\ref{thm:subsample-full} with the direct nonnegative-weight finite-vector bound
    of \citet{foygel2024hoeffding}.  Panel (b) compares the common theorem
    half-width with the empirical \(95\%\) subsample errors computed separately
    for the six realized model response vectors.}
    \label{fig:subsample-power}
\end{figure}

Table~\ref{tab:mmlu-subsample} reports the same comparison numerically.  For
each \(n\), the theorem half-width is common across models, while the empirical
columns summarize the six model-specific exact finite-subset distributions.

\begin{table}[H]
\centering
\caption{Subsample-vs-full errors on the MMLU response data.  The theorem
half-width is Theorem~\ref{thm:subsample-full} with \(N=14{,}042\) and \(\alpha=0.05\), and is common across models. The empirical columns are computed separately for each realized model response vector and then summarized
across the six models.}
\label{tab:mmlu-subsample}
\resizebox{\textwidth}{!}{\begin{tabular}{rrrrr}
\toprule
\(n\) & Mean empirical 95\% error & \makecell{Worst empirical 95\% error\\across 6 models} & Theorem half-width & \makecell{Max violation rate\\across 6 models} \\
\midrule
250 & 5.85 pp & 6.17 pp & 8.51 pp & 0.65\% \\
500 & 4.10 pp & 4.29 pp & 5.96 pp & 0.70\% \\
1,000 & 2.86 pp & 2.99 pp & 4.14 pp & 0.63\% \\
2,000 & 1.94 pp & 2.03 pp & 2.81 pp & 0.67\% \\
5,000 & 1.06 pp & 1.11 pp & 1.54 pp & 0.65\% \\
10,000 & 0.50 pp & 0.52 pp & 0.73 pp & 0.64\% \\
\bottomrule
\end{tabular}
}
\end{table}



Thus a \(5{,}000\)-item subset gives a model-agnostic
\(1.54\)-percentage-point planning guarantee relative to the full benchmark score while using only \(35.6\%\) of the benchmark. If a full benchmark run
costs \(C_{\rm full}\), then the approximate subset cost under linear
per-item pricing is $0.356\,C_{\rm full}$. This is the cleanest applied payoff of Theorem~\ref{thm:subsample-full}, and the cost-saving guarantee is free of any estimate of
\(\sigma_{\mathrm{mix}}^2\).

\begin{remark}[Paired model comparisons from subsamples]
The same subsample--vs--full idea can also be applied to paired model
comparisons.  Suppose two models \(A\) and \(B\) are evaluated on the same
items, and define the item-level paired difference
\[
    D_i = X_{B,i}-X_{A,i}.
\]
The full paired gap is \(N^{-1}\sum_{i=1}^N D_i\), while the paired gap
estimated from a subset \(S\) of size \(n\) is \(n^{-1}\sum_{i\in S}D_i\).
Their difference,
\[
    \frac1n\sum_{i\in S}D_i
    -
    \frac1N\sum_{i=1}^N D_i,
\]
is again a zero-sum linear contrast. Thus one may use the same concentration principle to quantify how many questions are needed to estimate the full paired model comparison to a
desired accuracy.
\end{remark}

\subsection{Limitations of the empirical analysis}

The empirical analysis should be interpreted with appropriate caution. First, infinite exchangeability assumptions are modeling assumptions and cannot be proved from one finite benchmark realization. The split-half correlations are diagnostics for necessary implications of the hierarchy, not formal tests of infinite exchangeability. Second, the beta--binomial quantities are model-based plug-in proxies. The displayed subject-macro intervals do not account for the additional uncertainty in estimating \(\hat\alpha_d,\hat\beta_d\) from only the observed subjects. Third, MMLU itself contains annotation and dataset-quality issues, as discussed by \citet{gema2025we}.

\section{Conclusion}\label{sec:conclusion}

We developed a bounded-difference concentration framework for infinitely
exchangeable sequences.  The starting point is the de Finetti representation:
after conditioning on the directing measure, the observations are independent,
so the usual bounded-difference argument applies conditionally.  Removing the
conditioning separates the deviation of a statistic into two components: a
conditional sampling fluctuation and a latent mixture fluctuation.  This gives
the effective proxy
\[
    \sigma_{\mathrm{eff}}^2
    =
    \frac14\sum_i c_i^2+\sigma_{\mathrm{mix}}^2,
\]
where the first term is the ordinary bounded-difference contribution and
\(\sigma_{\mathrm{mix}}^2\) controls the between-mixture fluctuation. 

The main structural simplification occurs for zero-sum linear contrasts.  If
\[
    f(X_1,\ldots,X_N)=\sum_{i=1}^N a_iX_i,
    \qquad
    \sum_{i=1}^N a_i=0,
\]
then
\[
    \E[f(X_1,\ldots,X_N)\mid\Theta]=0,
\]
and the mixture term vanishes exactly.  This cancellation yields a clean
subsample-vs-full concentration inequality:
\[
    \Pbb\left\{|\bar X_n-\bar X_N|\ge t\right\}
    \le
    2\exp\!\left(
        -\frac{2t^2 nN}{R^2(N-n)}
    \right),
    \qquad X_i\in[a,b], \quad R=b-a.
\]
Thus, although positive exchangeable dependence generally weakens
concentration for uncentered averages, the discrepancy between a random
subsample score and the full benchmark score remains controllable
because the latent mean cancels.

The MMLU analysis illustrates why this distinction matters.  Treating all
benchmark items as independent Bernoulli trials gives intervals that are too
narrow because it ignores subject-level and domain-level heterogeneity.  The
observed response data show substantial subject variation, positive dependence
patterns, and stable split-half structure, all of which are consistent with a
hierarchical infinite exchangeability interpretation.  For the subject-macro benchmark
score, the uncertainty should therefore include both item-level noise and
latent subject variation. In contrast, for the cost-saving problem of estimating the full benchmark score from a random subset of items, the relevant statistic is a zero-sum contrast, so the subsample-vs-full score bound in the paper applies without estimating a mixture proxy.

The empirical subsample experiments support this practical message. On the realized MMLU response vectors, the theoretical half-widths are conservative but informative, and they provide a direct way to trade off evaluation cost against accuracy relative to the full benchmark score. For example, evaluating only a moderate fraction of the benchmark can already give a several-percentage point guarantee for the full-score approximation. This is an applied payoff of Theorem~\ref{thm:subsample-full}: it gives a planning bound for cheaper benchmark evaluation under an exchangeable-model assumption, while avoiding the hard problem of estimating \(\sigma_{\mathrm{mix}}^2\).

Several limitations remain. First, the empirical diagnostics in this paper check qualitative consequences of the hierarchy, such as positive dependence and stable subject profiles, but they do not prove infinite extendibility. Second, for uncentered benchmark quantities, the usefulness of the bound depends on obtaining a meaningful subgaussian proxy for the latent mixture term. The beta--binomial fits used here provide a model-based and descriptive route to such proxies, but fully nonparametric estimation of \(\sigma_{\mathrm{mix}}^2\) remains an important open problem.

The broader lesson is that benchmark uncertainty depends on the estimand. If the goal is to estimate a fixed benchmark score from a random
subset, the subsample-vs-full concentration gives a sharp and directly usable guarantee. If the goal is to infer a model's broader ability across a population of possible subjects or questions, latent mixture uncertainty is intrinsic and must be modeled or bounded. Exchangeability provides a natural language for separating these two problems. We hope that this decomposition will be useful beyond MMLU, especially for large composite benchmarks where dependence across tasks is a feature of the evaluation rather than a nuisance to be ignored.

\bibliographystyle{abbrvnat}
\bibliography{refs}

\end{document}